
\documentclass{article}

\usepackage{microtype}
\usepackage{graphicx}
\usepackage{subfigure}
\usepackage{booktabs} 
\usepackage{subcaption} 

\usepackage{hyperref}



\usepackage[accepted]{icml2025}

\usepackage{amsmath}
\usepackage{amssymb}
\usepackage{mathtools}
\usepackage{amsthm}

\usepackage[capitalize,noabbrev]{cleveref}

\theoremstyle{plain}
\newtheorem{theorem}{Theorem}[section]

\theoremstyle{definition}
\newtheorem{definition}[theorem]{Definition}

\theoremstyle{remark}
\newtheorem{remark}[theorem]{Remark}


\usepackage[textsize=tiny]{todonotes}

\icmltitlerunning{G-Adaptivity: optimised graph-based mesh relocation for finite element methods}

\begin{document}

\twocolumn[
\icmltitle{G-Adaptivity: optimised graph-based mesh relocation for finite element methods}



\icmlsetsymbol{equal}{*}

\begin{icmlauthorlist}
\icmlauthor{James Rowbottom}{equal,cam}
\icmlauthor{Georg Maierhofer}{equal,cam}
\icmlauthor{Teo Deveney}{bathcs,bathmath}
\icmlauthor{Eike Mueller}{bathmath}
\icmlauthor{Alberto Paganini}{leicester}
\icmlauthor{Katharina Schratz}{sorbonne}
\icmlauthor{Pietro Liò}{camcs}
\icmlauthor{Carola-Bibiane Schönlieb}{cam}
\icmlauthor{Chris Budd}{bathmath}
\end{icmlauthorlist}

\icmlaffiliation{cam}{Department of Applied Mathematics and Theoretical Physics, University of Cambridge, UK}
\icmlaffiliation{bathmath}{Department of Mathematical Sciences, University of Bath, UK}
\icmlaffiliation{bathcs}{Department of Computer Science, University of Bath, UK}
\icmlaffiliation{leicester}{School of Computing and Mathematical Sciences, University of Leicester, UK}
\icmlaffiliation{sorbonne}{Laboratoire Jacques-Louis Lions, Sorbonne Universit\'e, France}
\icmlaffiliation{camcs}{Department of Computer Science and Technology, University of Cambridge, UK}

\icmlcorrespondingauthor{James Rowbottom}{jr908@cam.ac.uk}
\icmlcorrespondingauthor{Georg Maierhofer}{gam37@cam.ac.uk}

\icmlkeywords{Mesh Adaptation, r-Adaptivity, Monge–Ampere, Deep Learning, Finite Element Methods, Partial Differential Equations, Neural Network, Graph Neural Networks}

\vskip 0.3in
]



\printAffiliationsAndNotice{\icmlEqualContribution} 

\begin{abstract}
We present a novel, and effective, approach to achieve optimal mesh relocation in finite element methods (FEMs). The cost and accuracy of FEMs is critically dependent on the choice of mesh points. Mesh relocation (r-adaptivity) seeks to optimise the mesh geometry to obtain the best solution accuracy at given computational budget. Classical r-adaptivity relies on the solution of a separate nonlinear ``meshing'' PDE to determine mesh point locations. This incurs significant cost at remeshing, and relies on estimates that relate interpolation- and FEM-error. Recent machine learning approaches have focused on the construction of fast surrogates for such classical methods. Instead, our new approach trains a graph neural network (GNN) to determine mesh point locations by directly minimising the FE solution error from the PDE system Firedrake to achieve higher solution accuracy. Our GNN architecture closely aligns the mesh solution space to that of classical meshing methodologies, thus replacing classical estimates for optimality with a learnable strategy. This allows for rapid and robust training and results in an extremely efficient and effective GNN approach to online r-adaptivity. Our method outperforms both classical, and prior ML, approaches to r-adaptive meshing. In particular, it achieves lower FE solution error, whilst retaining the significant speed-up over classical methods observed in prior ML work.
\end{abstract}
\section{Introduction}

Finite element methods (FEM) are currently the most widely-used tool for the large scale solution of partial differential equations (PDEs) \cite{ainsworth_posteriori_1997,cotter_compatible_2023}. Central advantages are robustness, reliable error estimates, and thoroughly developed code bases (such as deal.II \cite{2024:africa.arndt.ea:deal}, DUNE \cite{bastian2008generic}, Fenics \cite{logg2012automated}, and Firedrake \cite{FiredrakeUserManual}), which are highly parallelisable and efficient. However, even with such optimised software the simulation of large scale problems (e.g. weather forecasting, structural simulations in engineering systems) is computationally costly.
An important ingredient that determines the cost is the number of degrees of freedom (DOFs) required by a FEM to satisfy a chosen error tolerance. Since this cost depends on the number $N_z$ of mesh points ${\mathbf z}^{(i)}$ of the underlying computational mesh, it is desirable to keep $N_z$ moderate. Mesh adaptivity based on mesh refinement and/or relocation to capture important solution features at the right scale can balance computational cost and accuracy. However, classical mesh-adaptive methods can be difficult to implement and require significant computational resources. In contrast, in the present work, we introduce a cheap, stable and highly efficient graph neural network (GNN) architecture  to implement learnable mesh relocation ($r$-adaptivity). Our method keeps $N_z$ fixed and adapts the mesh point locations to reduce the overall FE error. Many classical mesh relocation methods have focused on finding and minimising mathematical substitutes of the FE error (usually simplified upper bounds) and solving additional (differential) equations to relocate the mesh points. For example, one can solve the Monge-Amp\`ere (MA) equation \cite{budd_adaptivity_2009} to find mesh point locations that minimise the interpolation error, which is an upper bound (up to some parameters and constants) of the FEM error arising from C\'ea's lemma \cite{huang_adaptive_2011}. Recent Machine Learning (ML) approaches rely on similar mathematical simplifications and learn a surrogate for the mesh equations \cite{song_m2n_2022,zhang_UM2N_2024} leading to significant speed-up with comparable error reduction. In the present work we take an entirely different approach. We present G-adaptivity, an approach to mesh adaptivity that trains a GNN to generate meshes that \textit{directly minimise the error of the corresponding FEM solution}. We couple backpropagation through a novel diffusion-based GNN-deformer, with mesh point gradients obtained through an application of  Firedrake adjoint \cite{mitusch2019dolfin,FiredrakeUserManual}, to  minimise the FEM approximation error directly (as opposed to the upper bound considered in \cite{huang_adaptive_2011}). The result is a model capable of outperforming the current state-of-the-art $r$-adaptive methods, whilst retaining the significant acceleration of ML based approaches (cf. Figure \ref{fig:mesh_examples_figure1}).

\begin{figure}[ht]
\vskip 0.2in
\begin{center}
\centerline{\includegraphics[width=\columnwidth]{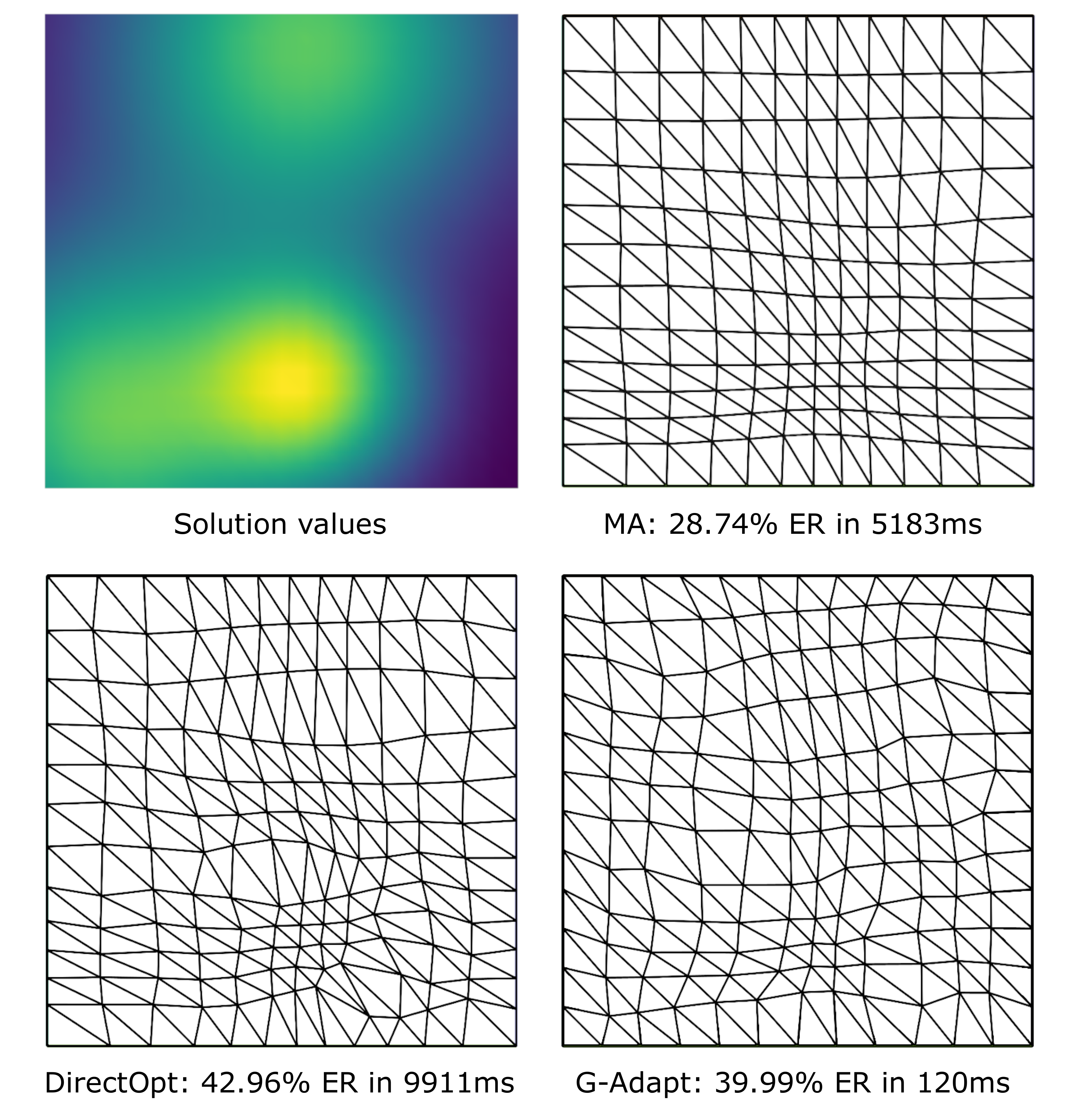}}
\caption{Optimised meshes from our new approach (G-Adapt) on the example of Poissons' equation: the error reduction (ER) achieved by classical Monge-Ampère (MA) can be significantly improved with direct optimisation (DirectOpt) of the FEM loss with respect to the mesh points, but at prohibitive additional cost. Our new approach achieves near optimal meshes in a fraction of the inference time.}
\label{fig:mesh_examples_figure1}
\end{center}
\vskip -0.2in
\end{figure}

\paragraph{Contributions} Our work improves earlier ML based approaches to mesh relocation in the following ways:
\begin{itemize}
    \item A novel training mechanism capitalising recent advances in FEM systems, which leads to a fast meshing algorithm that reduces the FE error even over state-of-the-art classical meshing methods. This was not possible in any prior surrogate ML approach;
    
    \item An improved GNN architecture based on a diffusion deformer, which allows for improved mesh relocation quality and provable non-tangling of generated meshes;

    \item A novel equidistribution loss regularizer, which enforces mesh regularity in unsupervised GNN training;
    
    \item Thorough numerical comparison with classical and recent approaches in terms of accuracy, mesh quality and computational time. Our experiments include both stationary and time-dependent test cases. 
\end{itemize}

\section{Related work}
\textbf{The effective approximation of PDE solutions} is one of the central problems in computational mathematics. Over the recent decade, extensive work has been devoted to using ML for the numerical approximation of PDEs. This includes physics informed neural networks (PINNS)  \cite{raissi_physics-informed_2019, raissi_forward-backward_2018}, Fourier Neural Operators (FNOs) \cite{li_fourier_2020, li_geometry-informed_2023}, graph neural operators \cite{li_multipole_2020}, DeepONets \cite{lu_learning_2021}, Message Passing Neural PDE Solvers \cite{brandstetter_message_2022} and the deep Ritz method \cite{e_deep_2018}. The majority of such approaches try to directly approximate the PDE, or the associated solution operator, with a machine learning surrogate. Such methods offer certain advantages (for example in high dimensional settings \cite{han_18}), but are typically outperformed by traditional numerical methods in accuracy in most settings \cite{grossman_2023}. Our approach is different. We use ML as a central ingredient of a finite element discretisation to construct an improved computational mesh, which is then coupled to a classical PDE solver. The crucial advantage is that we retain convergence guarantees and robustness of FEMs, something that is often lacking in direct ML-based PDE approximations. At the same time our approach achieves a significant speed up in the calculation of the improved mesh
compared to classical approaches.

\textbf{Adaptive mesh methods} are a widely used tool for improving the performance of a classical FEM by varying the local density of the mesh points. This is necessary if the PDE solution has small length scales or singularities. Adaptivity allows achieving high accuracy without resorting to uniform mesh refinement. The most popular form is $h$-adaptivity \cite{ainsworth_posteriori_1997}, in which mesh cells are subdivided when an a-posteriori estimate of the solution error is large. Such methods have complex data structures (see e.g. \cite{burstedde2011p4est}) and, possibly, poor mesh regularity. Alternatively, the relocation based $r$-adaptive methods considered in this paper move a fixed number of mesh points to achieve a high density of points where a monitor $m({\mathbf z})$ of the solution error is large. Done correctly this can lead to significant error reduction but at some extra cost \cite{huang_adaptive_2011}. 

\textbf{GNNs} are the dominant approach to applying machine learning to irregularly structured data \cite{bronstein_gdl, battaglia_relational_2018}.
There has been a proliferation of architectures inspired by spectral graph theory \cite{defferrard_convolutional_2016}, convolutional (GCN) \cite{kipf_semi-supervised_2022}, message passing (MPNN) \cite{gilmer_neural_2017} and attentional (GAT) \cite{velickovic_graph_2018} approaches.
More recently a range of differential equation inspired architectures \cite{chamberlain_grand_2021, chamberlain_beltrami_2021, giovanni_understanding_2023} apply analytical tools to solve known problems with GNNs including stability, over smoothing and bottleneck phenomena. This algorithmic alignment along with the powerful message passing paradigm provide new solutions to some of the most pressing problems in science, including protein folding \cite{jumper_highly_2021}, weather prediction \cite{lam_learning_2023}, dynamics learning \cite{pfaff_learning_2023} and new numerical PDE solvers \cite{brandstetter_message_2022, lienen_learning_2022, alet_graph_2019}.

\textbf{Fast ML based methods} reduce the significant computational cost of classical methods for adaptive meshing. This includes work on $h$-adaptive mesh refinement {\cite{foucart_deep_2023,freymuth2024swarm}}, and many contributions to $r$-adaptivity based on surrogate ML solvers of classical mesh movement PDEs \cite{yang_mmpde-net_2023,hu_better_2024} and supervised learning for mesh adaptivity using Graph Neural Networks (GNNs) \cite{song_m2n_2022}.  A notable recent development is the universal mesh movement network (UM2N) \cite{zhang_UM2N_2024}, which achieves error reduction on par with MA, but at significant speed up, and can also be applied to multiply connected domains.
\section{Preliminaries and background}\label{sec:preliminaries}

\subsection{Problem specification}\label{sec:problem_specification}
We consider finite element solutions to nonlinear second-order PDEs on general domains $\Omega$ of dimension $d$. In abstract form, we can write these PDEs as follows:
\begin{equation} \label{eqn:pde}
        \mathcal{F}(u_t,u,\nabla u, \nabla^2 u)=f \quad \text{in }{\Omega},\quad
        \alpha u + \beta \partial_n u=g \quad \text{on }\partial{\Omega},
\end{equation}
where $\alpha,\beta\in\mathbb{R}$.
For transient problems, $\Omega = \widetilde\Omega \times (0,T)$, where
$\widetilde\Omega$ is a $(d-1)$-dimensional spatial domain and $(0,T)$
is the time-interval of interest. In this case, we employ the
method of lines and combine the FEM with suitable
time-stepping schemes \cite{hairer_solving_1996}. To compute finite element solutions, we introduce a mesh $\mathcal{T}$ of the spatial domain with $N_z$ nodes, which we collect in the node set $\mathcal{Z}$. The mesh $\mathcal{T}$ is used to construct trial and test functions with local support to discretize \eqref{eqn:pde}. For example, for a Poisson problem with homogeneous Dirichlet boundary conditions we consider the space of piecewise linear functions (vanishing on $\partial\Omega$) $S_{\mathcal{Z}}$ on $\mathcal{T}$ and solve: Find $U_{\mathcal{Z}}\in S_{\mathcal{Z}}$ such that
\begin{align}\label{eqn:weak_form_poisson}
(\nabla U_{\mathcal{Z}},\nabla v)_{L^2({\Omega})}=(f,v)_{L^2({\Omega})}
\quad \forall v\in S_{\mathcal{Z}}\,;
\end{align}
where $(\cdot,\cdot)_{L^2({\Omega})}$ denotes the inner-product in $L^2({\Omega})$.

To minimise the error $E(\mathcal{Z},U_\mathcal{Z})$
between the exact solution $u$ of \eqref{eqn:pde} and its finite element approximation
$U_\mathcal{Z}$, $r$-adaptive meshing modifies
the location of the node coordinates. Often, and in this work,
$r$-adaptivity is particularly concerned with the reduction of the squared $L^2$-error
\begin{align}\label{eqn:definition_of_FEM_error}
    E(\mathcal{Z},U_\mathcal{Z}):=\|U_{\mathcal{Z}}-u\|_{L^2({\Omega})}^2.
\end{align}
For transient problems,
we tacitly assume that \eqref{eqn:definition_of_FEM_error} is
evaluated at the final time $t=T$.

\subsection{Adaptive Meshing}\label{sec:adaptive_meshing_prelims}
Relocation based $r$-adaptivity turns a mesh with a certain topology into another mesh with the same topology. For this, the mesh {\em points} ${\mathbf z}^{(i)}$ are moved, but their {\em connectivity} (and hence the associated data structures) is unaltered. Such methods typically map a fixed mesh in a {\em computational domain} (i.e. a representation of the mesh graph) to a {\em deformed mesh} in the {\em physical domain} where the PDE is posed. Note that, for transient problems, $\Omega$ should be replaced by $\tilde{\Omega}$ and $d$ should be replaced by $d-1$ in the following explanation (cf. Section~\ref{sec:problem_specification}). We denote the mesh points in the physical domain by $\mathcal{Z}=\{{\mathbf z}^{(i)}\}_{i=1}^{N_z} $ which form a triangulation $\mathcal{T}$ of $\Omega$ with $N_{\mathcal{T}}$ mesh elements, i.e. 
\begin{small}
\begin{align*}
\mathcal{T}&\subset\left\{\Delta^{(j)} \subset \mathcal{Z}; |\Delta^{(j)}|=d+1\right\}, \quad |\mathcal{T}|=N_{\mathcal{T}}.
\end{align*}
\end{small}
We define the following domains and coordinates:
 the ``computational" domain $\Omega_C$ is mapped to the ``physical" domain $\Omega_P \subseteq \mathbb{R}^{d}$, and the ``computational" coordinates $\boldsymbol{\xi}\in \Omega_C$ are mapped to the ``physical" coordinates $\boldsymbol{z}\in \Omega_P$. To construct an adaptive mesh we consider a differentiable, possibly  time-dependent, {\em deformation map} $\mathbf{F}: \Omega_C  \rightarrow \Omega_P$, so that $\boldsymbol{z}=\mathbf{F}(\boldsymbol{\xi}, t)$
and $ \mathbf{F}(\partial \Omega_C)=\partial \Omega_P$. If ${\boldsymbol{\xi}}^{(i)}$ are the {\em fixed} mesh points in the computational domain then
$\boldsymbol{z}^{(i)}=\mathbf{F}(\boldsymbol{\xi}^{(i)}, t)
$.
Assuming the mesh in the computational domain is regular, then determining the (properties of the) mesh in the physical domain, reduces to finding, (and analysing),  ${\mathbf F}$. 

\noindent {\em Location based methods} find ${\mathbf F}$ by solving a PDE, or a linked variational principle. {\em Monge-Amp\'ere} (MA) methods assume that ${\mathbf F}$ is a Legendre transform  with a `mesh potential' $\phi(\boldsymbol{\xi},t)$ for which ${\mathbf F} = \nabla_{\boldsymbol{\xi}} \phi. $ The linearisation of ${\mathbf F}$  is given by $J = \partial {\mathbf F}/\partial {\boldsymbol{\xi}} \equiv H(\phi)$ where $H$ is the Hessian of $\phi$. Relocation methods usually {\em equidistribute} a monitor function $m(z)$ so that $\phi$ satisfies the MA equation
\begin{equation}
m({\mathbf z}) |H(\phi)| = m(\nabla \phi) |H(\phi)| = \theta, \quad \mbox{for constant} \quad \theta.
\label{chris1}
\end{equation}
For example, in \cite{huang_adaptive_2011} $m({\mathbf z})$ is an a-priori monitor of the interpolation error. The PDE \eqref{chris1} has a unique, convex, solution \cite{budd_mongeampere_2013} which avoids mesh tangling. However, \eqref{chris1} is expensive  to solve and, in its pure form, only applicable to simply connected domains. Solution procedures include relaxation methods \cite{budd_adaptivity_2009}, quasi-Newton methods \cite{mcrae_cotter_colin_budd_18}, surrogates \cite{song_m2n_2022,zhang_UM2N_2024}, and PINNs \cite{yang_mmpde-net_2023}.

\noindent {\em Velocity based methods}  find an ODE describing the mesh point evolution in {\em pseudo-time} $\tau$ so that
\begin{equation}
\partial{\boldsymbol{z}}^{(i)}/\partial \tau  = {\mathbf v}(\boldsymbol{z}^{(i)},t).
\label{velocity}
\end{equation}
The choice of velocity function ${\mathbf v}$ is critical to the success of such methods, and is often motivated by natural Lagrangian structures of the underlying PDE. These methods provide the basis of our diffusion-based deformer (diffformer) described in section \ref{sec:diffformer} and while they often lead to mesh tangling where mesh lines cross (cf. Ch.~7 in \cite{huang_adaptive_2011}), our architecture is specifically designed to enforce non-tangling of the mesh (cf. Theorem~\ref{thm:nontangling}).

\section{The G-adaptivity framework}

The G-adaptive mesh relocation method described below is essentially a velocity based method with learnable coefficients that are trained by calculating the rate of change of the FE solution error $E$ with respect to the mesh point location. As we explain below, the G-adaptivity framework combines feature selection, structural regularization and direct optimisation to learn optimal mesh relocation in an unsupervised manner whilst avoiding mesh tangling.

\subsection{Graph-based adaptive mesh refinement}\label{sec:diffformer}

For simplicity of exposition we focus our discussion on the 2D case, but note that this approach generalises in a straightforward manner to 3D cases as shown in {Section~\ref{sec:3d_benchmarking}}. A mesh $\mathcal{T}$ (i.e. a triangulation of the domain $\Omega$) with meshpoints $\mathcal{Z}$ gives rise to a natural graph, with the nodeset $\mathcal{V}=\mathcal{Z}$ and the edgeset $\mathcal{E}=\{ (\mathbf{z}^i,{\mathbf z}^j) \in \mathcal{V}\times\mathcal{V}; \exists \Delta\,\in \mathcal{T},\text{\ s.t.\ 
}{\mathbf z}^i, {\mathbf z}^j\in \Delta\}$, i.e. two nodes share an edge if there is a triangle in the mesh $\mathcal{T}$ which has both nodes as vertices. The graph $(\mathcal{V}, \mathcal{E})$ can be enriched with node features $\left\{\mathbf{x}_i \in \mathbb{R}^{d_0}: i \in \mathcal{V}\right\}$ represented in matrix notation as $\mathbf{X} \in \mathbb{R}^{N_z\times d_0}$. For example we could associate to each mesh point $\mathbf{z}^{(i)}$ the value of the solution field $u(\mathbf{z}^{(i)})$ as a feature. Likewise, we can introduce latent features that propagate through repeated application of a map on the graph, this is used in our architecture (cf. Figure~\ref{fig:architecture}). Mesh connectivity is stored in the adjacency matrix $\mathbf{A}$ (where $a_{i j}=1$ if $(i, j) \in \mathcal{E}$ and zero otherwise). A graph neural network (GNN) $\mathcal{M}_\theta:\mathbb{R}^{N_z\times d_0}\times\mathcal{E}\rightarrow\mathbb{R}^{N_z\times d_N}$ is a map from features to features constructed with layers $\mathcal{L}_{\theta_k}:\mathbb{R}^{N_z\times d_k}\rightarrow\mathbb{R}^{N_z\times d_{k+1}}$ acting node wise as
$$\mathbf{x}_i^{k+1} = \mathcal{L}_{\theta_k}(\mathbf{x}_i^{k}) = \phi_{\theta_k}\left(\mathbf{x}_i^{k}, \sum\limits_{j\in\mathcal{N}_i}\varphi_{\theta_k}(\mathbf{x}_j^{k})\right)$$ where $\varphi_{\theta_k}$ is the learnable edge-wise operation, $\phi_{\theta_k}$ is a learnable node-wise aggregation and $\mathcal{N}_i=\{j\in\mathcal{V}; (i,j)\in \mathcal{E}\}$ is the set of nodes adjacent to the meshpoint $i$. 

Integral to the G-adaptivity framework is the construction of the feature matrix such that the GNN can act as a mesh deformer. Similar to \cite{zhang_UM2N_2024} we construct the feature matrix by concatenating coordinates of a regular mesh $\boldsymbol{\xi}\in\mathbb{R}^{N_z\times d}$ with a learnable feature encoding $\mathbf{h}_{\theta}(\mathcal{Z}_0,H)$ which, motivated by \eqref{chris1}, is dependent on the Frobenius norm of the Hessian $H(U_{\mathcal{Z}^0})=\|\partial_{i}\partial_{j} u\|_{F}:\Omega\rightarrow\mathbb{R}$ of the FEM solution $U_{\mathcal{Z}^0}$ on the undeformed mesh $\mathcal{Z}^0$. When higher order finite element functions are used in the approximation space of $U_{\mathcal{Z}}$ the Hessian can be obtained by simple differentiation, but even in the case of linear elements this information is recoverable using widely-used techniques such as the one described in Appendix~\ref{app:hessian}. The final input feature matrix is then $\mathbf{X}=\left(\mathcal{Z}^0 \mathbin\Vert \mathbf{X}_{\lambda}^0\right) \in \mathbb{R}^{N_x\times (d+|\lambda|)},\, \mathbf{X}_{\lambda}^0=\mathbf{h}_{\theta}(\mathcal{Z}^0,H), $ where $\lambda$ denotes the index set for the node features, which is passed into a GNN mesh deformer that then outputs the relocated mesh points. Previous works \cite{song_m2n_2022, zhang_UM2N_2024} used a graph attention network (GAT) \cite{velickovic_graph_2018} as the GNN mesh deformer
\begin{align}\label{eq:m2n}
\begin{pmatrix} 
\mathcal{Z}^{k+1} \\ 
\mathbf{X}_{\lambda}^{k+1} 
\end{pmatrix} 
=
\begin{pmatrix} 
\mathbf{A}_{\theta}(\mathbf{X}^k) \mathcal{Z}^k \\ 
\sigma_{\lambda}\!\left( \mathbf{A}_{\theta}(\mathbf{X}^k) \mathbf{X}_{\lambda}^k \mathbf{W}_{\lambda} \right) 
\end{pmatrix}
\end{align}
where $\mathbf{X}=(\mathcal{Z},\mathbf{X}_{\lambda})$ and $\mathbf{W}_{\lambda}$ is a learnable linear transformation matrix. To prevent mesh crossing the non-linearity and channel mixing are excluded from the positional channels in \eqref{eq:m2n}. In the above $\mathbf{A}_{\theta}(\mathbf{X}^k)$ is row-stochastic meaning that the top row of \eqref{eq:m2n} corresponds to a graph-based averaging over graph neighbours. Motivated by \cite{chamberlain_grand_2021} and velocity-based methods for meshpoint relocation introduced in section~\ref{sec:adaptive_meshing_prelims}, in our G-Adaptive framework this average is replaced by a diffusion based deformer (henceforth referred to as \textit{Diffformer})
\begin{align}\label{eqn:GRAND_diffusion_equation}
\dot{\mathcal{Z}}(\tau)=(\mathbf{A}_{\theta}(\mathbf{X}^k)-\mathbf{I}) \mathcal{Z}(\tau),\quad \mathcal{Z}(0)=\mathcal{Z}^k,
\end{align}
which is solved to a finite end time $\tau=\tau_{end}$ and leads to the meshpoint update $\mathcal{Z}^{k+1}= \mathcal{Z}(\tau_{end})$, i.e. an overall deformer of the form
\begin{align}\label{eqn:g-adaptivity}
\begin{pmatrix} 
\mathcal{Z}^{k+1} \\ 
\mathbf{X}_{\lambda}^{k+1} 
\end{pmatrix} 
=
\begin{pmatrix} 
\mathcal{Z}(\tau_{end}) \\ 
\sigma_{\lambda}\!\left( \mathbf{A}_{\theta}(\mathbf{X}^k) \mathbf{X}_{\lambda}^k \mathbf{W}_{\lambda} \right) 
\end{pmatrix}.
\end{align}
As before, the learnable attention $\mathbf{A}_{\theta}$ is row-stochastic, meaning \eqref{eqn:GRAND_diffusion_equation} is essentially a diffusion equation on the graph $\mathcal{V}$. Further details on our Diffformer are provided in Appendix~\ref{app:diffdeformer}. We can stack multiple layers of \eqref{eqn:g-adaptivity}, each time varying the number of hidden feature dimensions which are updated using the second row of \eqref{eqn:g-adaptivity}. We denote the overall GNN by the map  
$\mathcal{M}_\theta$ and a schematic overview of the components of $\mathcal{M}_\theta$ is provided in Figure~\ref{fig:architecture}.

\begin{figure}[h!]
\vskip 0.2in
\begin{center}
\centerline{\includegraphics[width=\columnwidth]{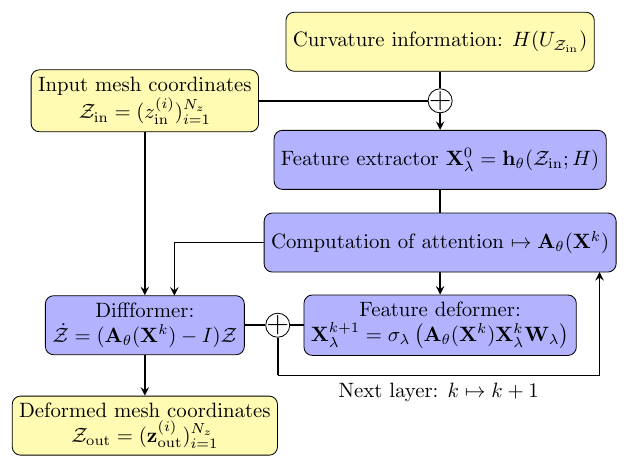}}
\caption{Schematic overview of our new graph diffusion-based architecture.}
\label{fig:architecture}
\end{center}
\vskip -0.2in
\end{figure}

\subsection{Structural regularization}\label{sec:structural_reg}
The architectural changes between (\ref{eq:m2n}) and (\ref{eqn:g-adaptivity}) lead to several regularity properties that we refer to as structural regularization. The Diffformer based architecture has a key advantage over other velocity based methods in the generation of regular meshes. A requirement of FEM meshes is that they are not `tangled', i.e. that they form a well-posed triangulation of the domain $\Omega$ (i.e. no triangles overlap). This follows if each mesh point is in the interior of the convex hull of its neighbours on the graph and can equivalently be characterised using the Jacobian of the deformation map $\mathcal{M}$.
\begin{definition}
Let $\mathbf{J}^{(i)}$ be the Jacobian of the deformation map $\mathcal{M}$ at simplex $\Delta^{(i)}$. A mesh is said to be tangled if there exists a simplex where the determinant of the Jacobian $\det (\mathbf{J}^{(i)}) \leq 0$ \cite{huang_adaptive_2011}.\end{definition} Velocity based methods often lead to tangled meshes due to the local way in which the mesh point movement is defined. However, this {\em does not arise} in our method.
\begin{theorem}[Discrete-Time Non-Tangling]\label{thm:nontangling}
   If the diffusion equation \eqref{eqn:GRAND_diffusion_equation} is solved with the forward Euler method, then for sufficiently small {pseudo-timestep $d\tau<1/2$}, the discrete mesh evolution under the deformation map \( \mathcal{M} \) preserves element orientations, ensuring that no mesh tangling occurs.
\end{theorem}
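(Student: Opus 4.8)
The plan is to reduce the statement to a single forward-Euler step on a single simplex and then to an elementary determinant-positivity estimate. Forward Euler applied to \eqref{eqn:GRAND_diffusion_equation} gives the update $\mathcal{Z}^{(n+1)} = \mathbf{B}\,\mathcal{Z}^{(n)}$ with $\mathbf{B} = \mathbf{I} + d\tau(\mathbf{A}_\theta-\mathbf{I}) = (1-d\tau)\mathbf{I} + d\tau\,\mathbf{A}_\theta$. Since $\mathbf{A}_\theta$ is row-stochastic with non-negative entries and $0 < d\tau < 1/2$, the matrix $\mathbf{B}$ is again row-stochastic and entrywise non-negative, and moreover strictly diagonally dominant, with $B_{ii} \ge 1-d\tau > 1/2 > \sum_{j\neq i} B_{ij}$; in particular every new node is a strict convex combination of its own old position and the old positions of its graph neighbours, and, by Gershgorin, every eigenvalue of $\mathbf{B}$ has real part $\ge 1-2d\tau>0$, so $\det\mathbf{B}>0$. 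Writing the deformation map $\mathcal{M}$ restricted to a computational simplex $\Delta^{(i)}$ as the composition of the per-step affine maps and using multiplicativity of the determinant (together with the fact that the computational mesh is regular, so the base Jacobian has positive determinant), the claim reduces to showing that each step preserves the sign of every simplex Jacobian, i.e.\ that $\det\mathbf{J}_n^{(i)} > 0$ for every step $n$ and simplex $i$. This is proved by induction on $n$, the hypothesis guaranteeing that the configuration at step $n$ is a valid, non-degenerate mesh.

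For the inductive step, note that on $\Delta^{(i)}$ the step-$n$ map is $x \mapsto x + d\tau\, V(x)$, where $V$ is the piecewise-linear interpolant of the nodal velocity $v = (\mathbf{A}_\theta-\mathbf{I})\mathcal{Z}^{(n)}$; hence $\mathbf{J}_n^{(i)} = \mathbf{I} + d\tau\, \mathbf{G}^{(i)}$ with $\mathbf{G}^{(i)} = \nabla V|_{\Delta^{(i)}}$. Because the piecewise-linear interpolant of the coordinate field is the identity map on the current physical domain, we may write $\mathbf{G}^{(i)} = \mathbf{W}^{(i)} - \mathbf{I}$, where $\mathbf{W}^{(i)}$ is the interpolant gradient on $\Delta^{(i)}$ of the field with nodal values $(\mathbf{A}_\theta \mathcal{Z}^{(n)})_j = \sum_k (a_\theta)_{jk} z_k^{(n)}$, i.e.\ convex combinations of mesh points. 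Thus $\mathbf{J}_n^{(i)} = (1-d\tau)\mathbf{I} + d\tau\,\mathbf{W}^{(i)}$, and the goal becomes $\det\big((1-d\tau)\mathbf{I} + d\tau\,\mathbf{W}^{(i)}\big) > 0$.

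I would finish with a homotopy argument in $d\tau$: for $s\in[0,1]$ the matrix $(1-s\,d\tau)\mathbf{I} + s\,d\tau\,\mathbf{W}^{(i)}$ depends continuously on $s$, equals $\mathbf{I}$ at $s=0$, and its determinant can change sign only by vanishing; it vanishes for some $s\in(0,1]$ iff $\mathbf{W}^{(i)}$ has a real eigenvalue equal to $-(1-s\,d\tau)/(s\,d\tau)$, a quantity that is always $<-1$ since $d\tau<1/2$. Hence it is enough to show $\mathbf{W}^{(i)}$ has no real eigenvalue $\le -1$, equivalently that $\mathbf{I}+\mathbf{W}^{(i)}$ has no non-positive real eigenvalue. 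This is where the convex-combination structure enters: one converts the strict diagonal dominance of $\mathbf{B}$ into a Gershgorin-type spectral bound on the local gradient $\mathbf{W}^{(i)}$, or, alternatively, invokes the equivalence recalled in the text between the ``interior of the convex hull of neighbours'' condition and positivity of all simplex Jacobians (Tutte/Floater-type embedding results), together with the fact that a damped averaging step with factor $d\tau<1/2$ keeps every vertex strictly inside that hull.

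The main obstacle is precisely this last spectral/geometric step. The subtlety is that $\mathbf{W}^{(i)}$ — and hence $\mathbf{J}_n^{(i)}$ — is \emph{not} a purely local quantity: the velocity at a vertex of $\Delta^{(i)}$ involves all of that vertex's graph neighbours, including ones outside $\Delta^{(i)}$, so one cannot argue simplex-by-simplex in complete isolation, and one must ensure the estimate does not degrade for thin or highly stretched simplices. The role of the threshold $1/2$ is exactly to make $\mathbf{B}$ strictly diagonally dominant (equivalently, to keep all eigenvalues of $\mathbf{B}$ in the open right half-plane and the averaging step a genuine contraction towards the current configuration), which is what makes the determinant estimate close; the remaining ingredients — the forward-Euler algebra, multiplicativity of $\det$, and the continuity argument — are routine.
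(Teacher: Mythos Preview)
Your first paragraph already contains essentially the paper's entire argument. The paper works \emph{globally}: it writes the forward-Euler step as $\mathbf{X}^{k+1}=(I-d\tau\Delta)\mathbf{X}^k$ with $\Delta=I-\mathbf{A}_\theta$, identifies the ``Jacobian of the deformation map'' with this global update matrix $\mathbf{B}=I-d\tau\Delta$, and then shows $\det\mathbf{B}>0$ exactly as you do --- Gershgorin places the eigenvalues of $\Delta$ in the disk $|\lambda-1|\le 1$, so for $d\tau<1/2$ every real eigenvalue of $\mathbf{B}$ satisfies $1-d\tau\lambda\ge 1-2d\tau>0$, while complex eigenvalues of the real matrix $\mathbf{B}$ occur in conjugate pairs whose product is positive. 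The paper then simply writes $\det J^{k+1}=\det\mathbf{B}\cdot\det J^k$ and stops.

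Where you diverge is in attempting the stronger per-simplex statement $\det\mathbf{J}_n^{(i)}>0$ for the $d\times d$ local deformation Jacobian on each element, which is what the paper's own definition of tangling actually demands. The paper never attempts this; it tacitly identifies the global determinant with the simplex-level condition. Your route is therefore more faithful to the stated definition, but the obstacle you flag in your last paragraph is genuine and is not resolved by either of the fixes you propose: a Gershgorin-type bound does not transfer from $\mathbf{B}$ to the element gradient $\mathbf{W}^{(i)}$, because $\mathbf{W}^{(i)}$ involves the inverse of the edge matrix of $\Delta^{(i)}$ and can have arbitrarily large entries on thin elements; and Tutte/Floater embedding theorems concern the \emph{fixed point} of the averaging iteration with a pinned convex boundary, not a single damped step of it. In summary, you have reproduced (and slightly sharpened) the paper's proof in your opening paragraph; the remainder is an attempt at a genuinely harder statement that the paper does not actually prove.
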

\noindent A full proof is given in Appendix~\ref{app:nontangling_proof} but in essence the diffusion process ensures that meshpoints are simultaneously moved along directions that point into the convex hull of neighbouring meshpoints, thus ensuring that tangling cannot occur (cf. Figure~\ref{fig:grand_moves_to_neighbours}).

\begin{figure}[h!]
\vskip 0.2in
\begin{center}
\centerline{\includegraphics[width=.85\columnwidth]{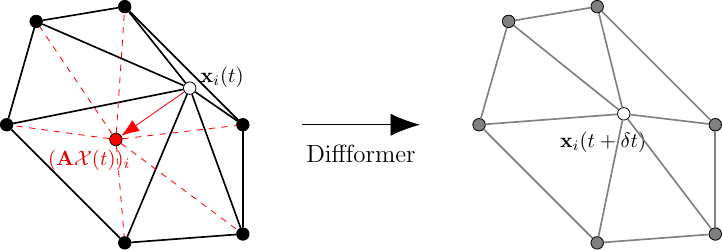}}
\caption{The action of the graph diffusion pulls nodes into the convex hull of their graph neighbours.}
\label{fig:grand_moves_to_neighbours}
\end{center}
\vskip -0.2in
\end{figure}

The proof relies on the softmax of the attention mechanism normalising the adjacency to be row stochastic and for the time step of the residual connection to be controllable. This is a benefit over (\ref{eq:m2n}) and allows (\ref{eqn:g-adaptivity}) to learn an anisotropic diffusion which is akin to a learnable monitor function from classical relocation methods.

\subsection{Firedrake adjoint optimal gradient computation and direct FEM loss}
Training the GNN $\mathcal{M}_\theta$ requires computing the derivative of
$E(\mathcal{Z})=E(\mathcal{Z},U_\mathcal{Z})$ with respect to the node coordinates
$\mathcal{Z}$. Since evaluating $E(\mathcal{Z},U_\mathcal{Z})$
requires solving the PDE \eqref{eqn:pde} first, a na\"ive application of
automated differentiation would result in the solution of additional
$(N_z\times d)$ PDEs ($N_z\times (d-1)$ in the transient case).
To avoid the additional computational cost, we employ the well-established
method of adjoints. Specifically, we employ Firedrake's automated adjoint capabilities implemented in pyadjoint \cite{mitusch2019dolfin}.
With pyadjoint, derivatives with respect to mesh coordinates
can be computed in an automated fashion as shape derivatives of
$E(\mathcal{Z},U_\mathcal{Z})$ in directions discretized
with vector-valued linear Lagrangian FEs \cite{HaMiPaWe19}.
\begin{remark} Often, the exact solution $u$ to \eqref{eqn:pde} is
not known and must itself be approximated with the FEM
(e.g., by interpolating onto $U_\mathcal{Z}$ a FE
solution computed on a much finer mesh). In this case, it is essential
to correct the directional derivatives computed with Firedrake by adding
the corrections terms stemming from evaluating the formula
\begin{equation*}
    \int_\Omega (u-U_\mathcal{Z})\nabla u \cdot V\, dx
\end{equation*}
along each finite element direction $V\in S_\mathcal{Z}^d$ ($V\in S_\mathcal{Z}^{(d-1)}$ in the transient case). This is necessary because
the shape derivative of a FE function in a direction
discretized with FEs is zero \cite{HaMiPaWe19}.
\end{remark}

\subsection{Regularized gradients}\label{sec:gadapt-equidistribution}

In line with the concept of equidistribution discussed in Appendix~\ref{app:equidistribution_principle} we introduce a regularizing term in training which further enforces mesh regularity in an unsupervised manner and leads to improved training of the mesh deformer using only information about a predefined monitor function $m(U_{\mathcal{Z}})$ (we follow \cite{zhang_UM2N_2024} and use $m(U_{\mathcal{Z}})=1+5\frac{|\nabla U_{\mathcal{Z}}|}{\max_{\Omega}|\nabla U_{\mathcal{Z}}|}$). The motivation is to  provide a global signal that moves mesh points into regions of the domain where the solution varies and likely requires more meshpoints to resolve. For this we add the following regularizing term to our loss:
$${\mathcal{L}_\mathrm{equi}}(\mathcal{Z}) = \sum\limits_{\Delta^{(j)}\in \mathcal{T}} \Big| \int_{\Delta^{(j)}}m(x)dx - \overline{m} \Big|^2,$$
where $\overline{m}=|\mathcal{T}|^{-1}\sum\limits_{\Delta^{(j)}\in \mathcal{T}} \int_{\Delta^{(j)}}m(x)dx$. Given the area of a simplex in the mesh $\alpha(\Delta^{(i)})\in\mathcal{T}$ the terms in the above loss are approximated by $\int_{\Delta^{(j)}}m(x)dx\approx \alpha(\Delta^{(j)})c_d\sum_{\mathbf{z}\in\Delta^{(j)}}m(\mathbf{z})$, {where $c_2=1/3,c_3=1/4$}. This leads to the following full regularized loss which we use in the training of our Diffformer:
\begin{equation}\label{eq:full_loss}
    \mathcal{L}_{\theta}= E(\mathcal{M}_{\theta})+\mathcal{L}_{\mathrm{equi}}(\mathcal{M}_{\theta})
\end{equation}

During training the weighted graph Laplacian ($\mathbf{A}_{\theta}-I$ in \eqref{eqn:GRAND_diffusion_equation}) will adaptively adjust to minimize both terms, meaning the mesh evolution will not purely follow the degree weighted graph Laplacian dynamics but will now be biased towards error reduction and equidistribution.

\section{Experimental results}

We evaluate G-adaptivity on three classical meshing problems in two-dimensions: an elliptic PDE (Poisson's equation) in a variety of convex domains, a nonlinear time-evolution PDE (Burger's equation), and the time dependent Navier-Stokes equations in a multiply connected domain. In the following we present the performance improvements obtained in terms of the FEM $L^2$-error reduction (cf. \eqref{eqn:definition_of_FEM_error}) and compute time, using our novel approach for adaptive meshing on each of these problems. {Additional experiments and sensitivity analysis is provide in the Appendix \ref{sec:additional_experiments}}. Full code to build the datasets and reproduce our results {can be found at \href{https://github.com/JRowbottomGit/g-adaptivity}{https://github.com/JRowbottomGit/g-adaptivity}}.

\subsection{Experimental details}

\paragraph{Method}

Our experimental pipeline consists of three parts: 
(i) we build datasets containing information about the PDE, FEM solution on {a regular (i.e. not relocated) grid and the corresponding approximation of the Hessian of the solution}; (ii) then we train either our model or the baseline to predict a relocated mesh on which {we perform another FE solve to obtain the improved solution approximation}; (iii) finally, we compare this FEM solution to a reference solution (calculated on a fine reference mesh) and determine the change in $L^2$-error over the original undeformed mesh, i.e. in the above notation we look at relative error reduction of $E(U_{\mathcal{M}_{\theta}})$ over $E(U_{\mathcal{Z}_{0}})$. These steps are repeated for the three PDE datasets as described below. {Note that the Firedrake adjoint solve is only required during training of our G-Adaptive network and not during inference. The times reported in the below numerical examples thus contain a true reflection of the fast online mesh adaption times.}

\paragraph{Datasets}

For each experiment described below we build a randomised dataset with held-out test data. This means in each case we specify a set of solution values through varying source terms or boundary conditions (adjusted to the PDE at hand). We then generate training and test sets of coarse, undeformed, meshes $\mathcal{T}$ of varying resolution with associated FEM solution values and, for each mesh, we also compute a reference solution (on a finer reference mesh) which serves as comparison for the error computation. For most test cases the solution values are generated by randomly sampling Gaussians in the domain $\Omega$. For the {example of the} Navier--Stokes flow around a cylinder we used snapshots of a time-series simulation of vortex shedding. Full details on the specific configurations for each experiment are provided in Appendix \ref{app:exp_details}. 

\paragraph{Baselines}

We compare our algorithm against two adaptive mesh algorithms: classical MA as described and implemented by \cite{wallwork} and the ML based surrogate GNN method UM2N \cite{zhang_UM2N_2024}. These two state-of-the-art approaches serve as a baseline for the FEM error reduction and deformation time. As a third baseline we train UM2N on our regularized PDE loss \eqref{eq:full_loss}, which we denote by UM2N-G in the tables below, in order to highlight the performance improvements gained from both our new architecture (Figure~\ref{fig:architecture}) and our new training \eqref{eq:full_loss}.

\paragraph{Experiment details}
Here we refer to our framework G-adaptivity and our model Diffformer synonymously, which we train using the regularized PDE-loss \eqref{eq:full_loss} (which is the regularized $L^2$ FEM approximation error + equidistribution regularizer). Calculation of the $L^2$-error is obtained by calculating an FE solution on the moved mesh and comparing this to the projection of the reference solution of a fine regular reference mesh onto sufficiently higher order elements. We train our new model (G-Adapt) and UM2N-G for 300 epochs using an Adam optimiser and learning rate of 0.001. Our model has $4$ diffformer blocks as described in Appendix \ref{app:exp_details}. Each blocked is rolled out using explicit Euler integration for 32 steps with a step size of 0.1. For the baseline UM2N we trained using 1000 epochs in order to achieve good performance but we believe further tuning of the training may be required in order to achieve a similar performance to the one reported in \cite{zhang_UM2N_2024}. UM2N remains an important baseline and we expect that with appropriate training the method would be able to achieve a similar error reduction (ER) as the Monge-Ampère (MA) solver, but we would like to highlight that even in the best reported results of the original paper, UM2N never achieved a larger error reduction than MA.

\paragraph{Evaluation}
We report three metrics to evaluate the performance of the mesh relocation methods at hand: (i) the relative $L^2$ error reduction (ER) of the FE solution on the relocated mesh versus the FE solution on the initial coarse mesh (larger error reduction means improved performance); (ii) the time taken to relocate the mesh (shorter times means faster relocation); (iii) the aspect ratio of the deformed mesh as a measure of mesh quality as described in Appendix \ref{app:mesh_quality} (a single digit aspect ratio is generally acceptable and a smaller aspect ratio indicates a more regular mesh). Each experiment is performed {in full five times (training and evaluation) with different random seeds} to provide the error bars.

\subsection{Benchmarking on Poisson's equation}\label{sec:benchmarking_poisson}

Our first benchmark is on the classical Poisson problem $-\nabla^2 u = f({\mathbf z})$ with Dirichlet boundary conditions. Full details of the FEM formulation are given in Appendix~\ref{app:poisson}. We benchmark the results against two datasets on a square (cf. Figure~\ref{fig:mesh_examples_figure1}) and polygonal domain respectively (cf. Figure~\ref{fig:poisson_poly_figure}). {Further evaluations on five additional (non-convex) geometries are provided in Appendix~\ref{sec:benchmarking_nonconvex}.} For both {aforementioned} examples we sample source terms and boundary conditions corresponding to underlying Gaussian fields. On the square domain $\Omega=[0,1]^2$ we initialise the mesh-deformation with a regular grid and to showcase G-adaptivty's ability to work on irregular domains with unstructured meshes we apply a similar methodology to the convex polygonal dataset (a sample is shown in Figure~\ref{fig:poisson_poly_figure}). The results for both datasets are presented in Table~\ref{tab:poisson}. The central observation in these results is that our methodology provides the very first ML approach to mesh relocation which is able to outperform MA in terms of error reduction, while retaining the fast mesh relocation times given by the state-of-the-art GNN - UM2N \cite{zhang_UM2N_2024}. \vspace{-0.4cm}
\begin{table}[h!]\label{tab:poisson}
\caption{Benchmarking results on Poisson Square and Poisson Convex Polygon datasets.}
\label{combined-benchmark-table}
\begin{center}
\begin{small}
\begin{sc}
\begin{tabular}{lccr}
\toprule
\multicolumn{4}{c}{\textbf{Poisson Square}} \\
\midrule
Model & Error Red. (\%) & Time (ms) & Aspect \\
\midrule
\textcolor{gray}{$\text{DirectOpt}^\dagger$}\hspace{-1cm} & \textcolor{gray}{27.40 $\pm$ 0.00}  & \textcolor{gray}{126028}   & \textcolor{gray}{33.99 $\pm$ 0.00} \\
MA        & 12.69 $\pm$ 0.00 & 3780     & 2.11 $\pm$ 0.00 \\
UM2N & 6.83 $\pm$ 1.10 & 70 & 1.99 $\pm$ 0.03\\
UM2N-g  & 16.40 $\pm$2.65 & 30 & 2.61$\pm$0.17 \\
G-Adapt   & \textbf{21.01 $\pm$ 0.33} & 88 & 2.92 $\pm$ 0.03    \\
\midrule
\multicolumn{4}{c}{\textbf{Poisson Convex Polygon}} \\
\midrule
\textcolor{gray}{$\text{DirectOpt}^\dagger$}\hspace{-1cm} & \textcolor{gray}{20.51 $\pm$ 0.00}  & \textcolor{gray}{56280}   & \textcolor{gray}{2.07 $\pm$ 0.00} \\
MA        & 10.97 ± 0.00 & 4446 & 1.95 $\pm$ 0.00 \\
UM2N   & 3.12 $\pm$ 0.38   & 36  & 1.66$\pm$0.03\\
UM2N-g  & 15.00 $\pm$ 0.13 & 16 & 1.88 $\pm$ 0.03\\
G-Adapt   & \textbf{16.84 $\pm$ 0.10} & 55 & 1.86 $\pm$ 0.02    \\
\bottomrule
\end{tabular}
\end{sc}
\end{small}
\end{center}
\vskip -0.1in
\label{tab:poisson}
\end{table}
\vspace{-0.3cm}

{$\dagger$ The {direct optimization method} is included here {purely for exposition}, showing that MA-meshes are not necessarily optimal. DirectOpt computes the optimal mesh for a given PDE with known solution but is extremely slow and relies on data which is not available during inference, thus it does not constitute a practical adaptive meshing strategy. In contrast, once trained, {our G-Adaptive approach yields fast online mesh movement} without needing reference solution values.}
\begin{figure}[ht]
\vskip 0.2in
\begin{center}
\centerline{\includegraphics[width=\columnwidth]{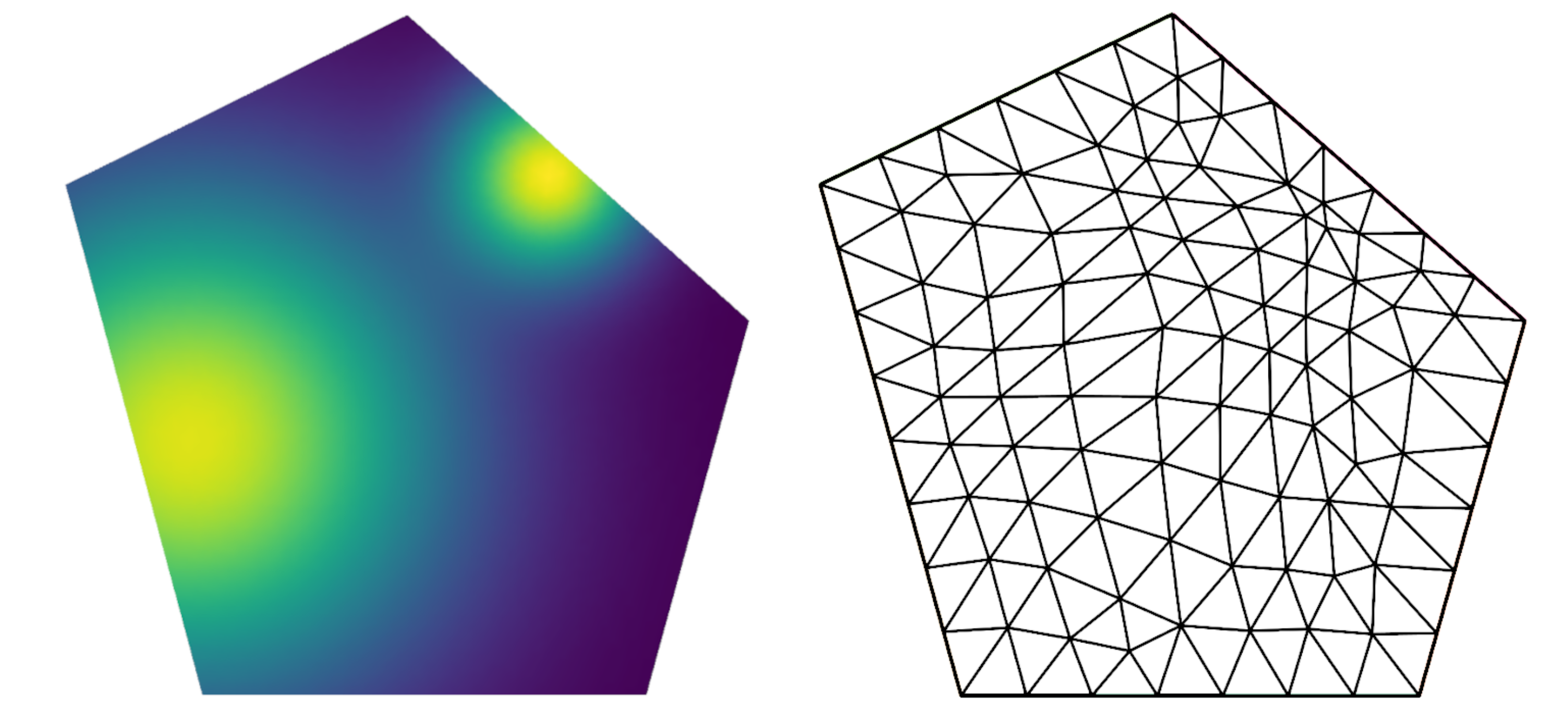}}
\caption{The solution fields and corresponding G-Adaptive mesh with 23.10 \% error reduction in 59ms on a polygonal domain.}
\label{fig:poisson_poly_figure}
\end{center}
\vskip -0.2in
\end{figure}

\subsection{Time-dependent Burgers' equation}\label{sec:burgers_benchmark}
In our second example we highlight that our approach can equally well be applied to time-dependent problems, in particular the viscous Burgers' equation:
\begin{align*}
\frac{\partial \mathbf{u}}{\partial t} + (\mathbf{u} \cdot \nabla) \mathbf{u} - \nu \nabla^2 \mathbf{u} = 0.
\end{align*}
Further details on the specific FEM implementation (and implicit time-stepper) used are given in Appendix~\ref{app:burgers}. We randomly sample Gaussians on the square domain $\Omega=[0,1]^2$ as initial conditions for the evolution in Burgers' equation and perform the following experiments.
\paragraph{Burgers' square rollout:} We train the models on {a set of Gaussian initial conditions} for a timestep $\delta t=0.02$ with 2 steps and evaluate by following 10 trajectories of randomly sampled Gaussians in the Burgers equation for 20 timesteps, remeshing after every 2 steps (cf. Figure~\ref{fig:burgers_figure} {and Appendix~\ref{sec:additional_burgers}}). The results in the top part of Table~\ref{burgers-benchmark-table} show the average error reduction over achieved over every block of two timesteps. While the MA performs well on this task, we note that the UM2N and UM2N-G baselines appear to lead to a \textit{negative} error reduction (i.e. an increase), which is likely due to the fact that the Burgers' equation changes the solution shape and thus trajectories will lead to out-of-distribution cases for methods that are trained only on initial conditions. Due to the structural regularity of our new approach (cf. Section~\ref{sec:structural_reg}) our approach is able to deal with out-of-distribution data very well, and most importantly is able to outperform MA in terms of error reduction while retaining a fast mesh relocation time. \vspace{-0.3cm}
\begin{table}[h!]
\caption{Benchmarking results on Burgers' Square datasets.}
\label{burgers-benchmark-table}
\vskip 0.15in
\begin{center}
\begin{small}
\begin{sc}
\begin{tabular}{lccr}
\toprule
\multicolumn{4}{c}{\textbf{Burgers' Square Rollout}} \\
\midrule
Model       & Error Red. (\%)      & Time (ms) & Aspect \\
\midrule
MA           & 25.78 $\pm$ 0.00 & 18884 & 1.99 $\pm$ 0.00 \\
UM2N        & --11.24 $\pm$ 2.52 & 314 & 2.52 $\pm$ 0.39 \\
UM2N-g     & --2.33 $\pm$ 3.73 & 316 & 2.83 $\pm$ 0.11\\
G-Adapt      & \textbf{27.17 ${\pm}$ 0.34} & 717 & 2.85 $\pm$ 0.12 \\
\midrule
\multicolumn{4}{c}{\textbf{Burgers' Square 10 Steps}} \\
\midrule
MA          & 12.28 $\pm$ 0.00 & 11566 & 1.99 $\pm$ 0.00  \\
UM2N     &  3.52 $\pm$ 0.60  & 30 & 2.33 $\pm$ 0.01 \\
UM2N-g    & 16.38 $\pm$ 2.85 & 41 &   1.83  $\pm$ 0.10 \\
G-Adapt     & \textbf{21.66 $\pm$ 3.13} & 93  & 2.82 $\pm$ 0.06 \\
\bottomrule
\end{tabular}
\end{sc}
\end{small}
\end{center}
\vskip -0.1in
\end{table}\vspace{-0.2cm}
\begin{figure}[h!]
\vskip 0.2in
\begin{center}
\centerline{\includegraphics[width=\columnwidth]{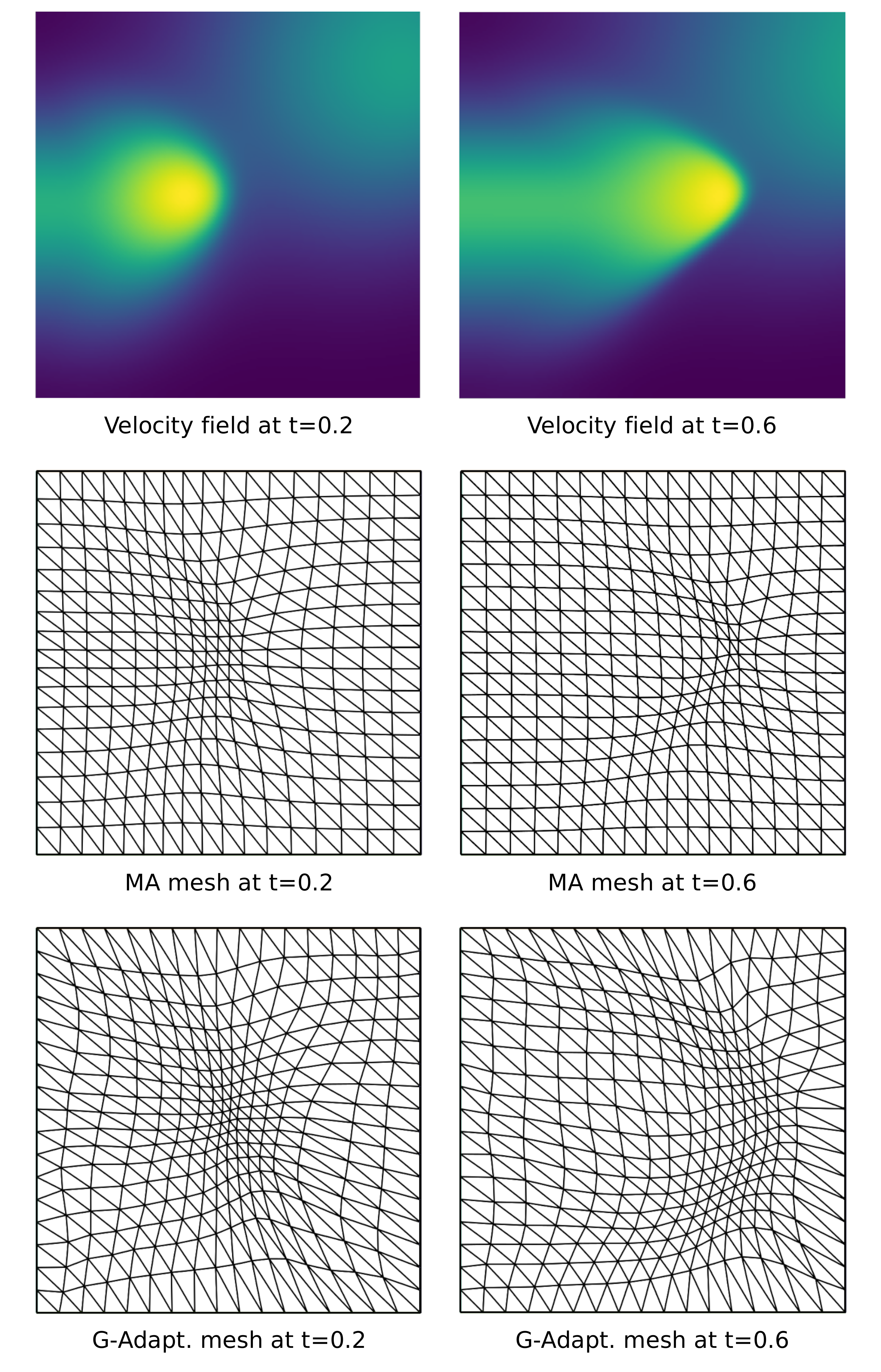}}
\caption{Snapshots of the velocity field (x-component) together with the corresponding deformed meshes provided by Monge--Ampère (MA) with 46.52\% average error reduction over the full solution path compared to the deformed meshes provided by our approach (G-Adapt.) with 49.15\% error reduction.}
\label{fig:burgers_figure}
\end{center}\vspace{-0.5cm}
\vskip -0.2in
\end{figure}\vspace{-0.0cm}
\paragraph{Burgers' square 10 steps:} The interpolation error in remeshing is significant and provides a central limitation to current mesh relocation techniques (cf. \cite{budd_adaptivity_2009}). It is thus desirable to relocate meshes only after several timesteps. It turns out that our approach lends itself to targeted training not just of a GNN that reduces the FEM error in a stationary sense, but a GNN that seeks to find an optimal mesh \textit{given a specified} remeshing frequency. The classical method MA has no means of inferring this information or adjusting the meshes accordingly. On this example we trained the GNN on a collection of random Gaussian initial conditions with the loss attained by solving the corresponding FEM problem for 10 timesteps of size $\delta t=0.02$. The results in Table~\ref{burgers-benchmark-table} highlight that in this way we can achieve even more significant ER over MA thus leading to efficient meshes that require less frequent changes in time-evolving systems. 

\subsection{Navier--Stokes equation and flow past a cylinder}\label{sec:benchmarking_navierstokes}

Our final example is the canonical flow past cylinder problem we simulate data using an FE solution for 400 time steps of size $\delta t =0.01$ of the time series evolution expressed in Gaussian basis function expansions (cf. Figure~\ref{fig:navier_stokes_figure} and Appendix~\ref{app:gaussian_expansions}). The training and test data are 25 and 50 respectively random snapshots from the range $t\in[1, 4]$ with remeshing after every 5 timesteps. Full details of the PDE and FEM formulation are provided in Appendix~\ref{app:navierstokes}. Again we observe good error reduction and fast mesh relocation times in our new methodology.

\vspace{-0.3cm}
\begin{figure}[h!]
\vskip 0.2in
\begin{center}
\centerline{\includegraphics[width=\columnwidth]{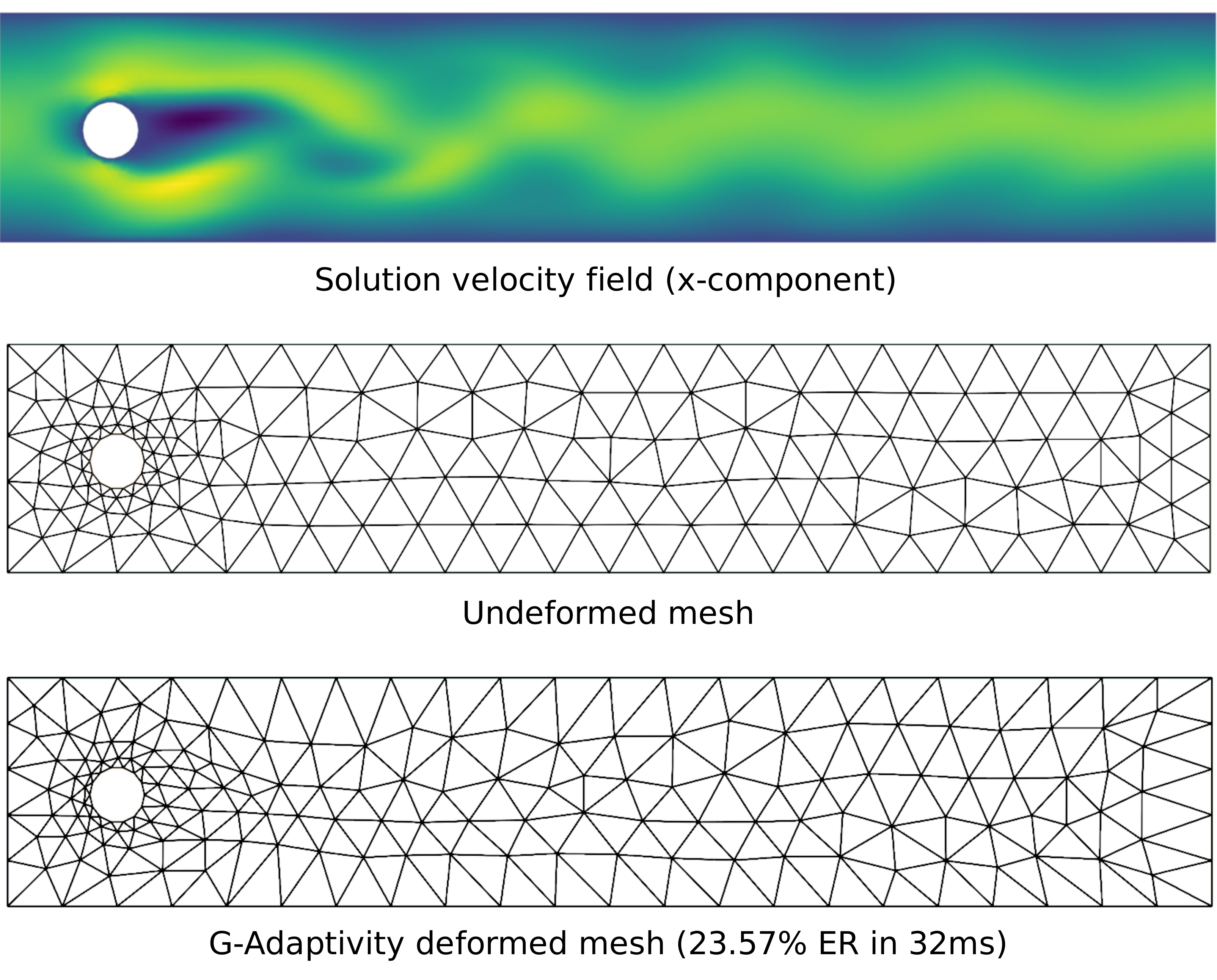}}
\caption{The G-Adaptivity-deformed mesh on the Navier--Stokes equation (23.57\% error reduction in 32ms). The adapted mesh correctly recognises areas of large solution curvature and resolves them more finely (on the upstream side of the cylinder resolving the stagnation point singularity and along the path of shed vorticity).}
\label{fig:navier_stokes_figure}
\end{center}
\end{figure}

\begin{table}[h!]
\caption{Benchmarking results on Navier--Stokes datasets.}
\label{navier-benchmark-table}
\vskip 0.15in
\begin{center}
\begin{small}
\begin{sc}
\begin{tabular}{lccr}
\toprule
\multicolumn{4}{c}{\textbf{Navier--Stokes}} \\
\midrule
Model       & Error Red. (\%)      & Time (ms) & Aspect \\
\midrule
$\text{MA}^*$         & NA & - & - \\
$\text{UM2N}^\dagger$        & 1.34 $\pm$ 0.57 & 44 & 1.65 $\pm$ 0.06 \\
UM2N-g    & 25.55 ± 0.81 & 30 & 2.32 ± 0.06 \\
G-Adapt     & \textbf{26.36±1.37} & 49 & 3.51 ± 0.81 \\
\bottomrule
\end{tabular}
\end{sc}
\end{small}
\end{center}
$*$ Standard Monge--Amp\`ere solvers do not converge on multiply connected domains.
$\dagger$ Since no MA data is available we use the best UM2N model from Section \ref{sec:benchmarking_poisson}.
\vskip -0.1in
\end{table}

\subsection{3D adaptive meshing}\label{sec:3d_benchmarking}
{The G-Adaptivity framework and diffusion deformer model are also easily adapted to the 3D setting. To demonstrate this we perform an experiment on a 10x10x10 unit cube for the 3D Poisson problem with Dirichlet boundary conditions and Gaussian solutions, analogous to Section~\ref{sec:benchmarking_poisson}. An example of the corresponding results can be seen in Figure~\ref{fig:3d_examples}. In the interest of brevity, the full numerical results are presented in Appendix~\ref{sec:gadapt_scale} and show that the method outperforms MA significantly (out-of-the-box UM2N does not apply in 3D) and that it leads to effective mesh point concentration in regions of interest. }
\vspace{-0.7cm}
\begin{figure}[h!]
\vskip 0.2in
\begin{center}
\centerline{\includegraphics[width=\columnwidth]{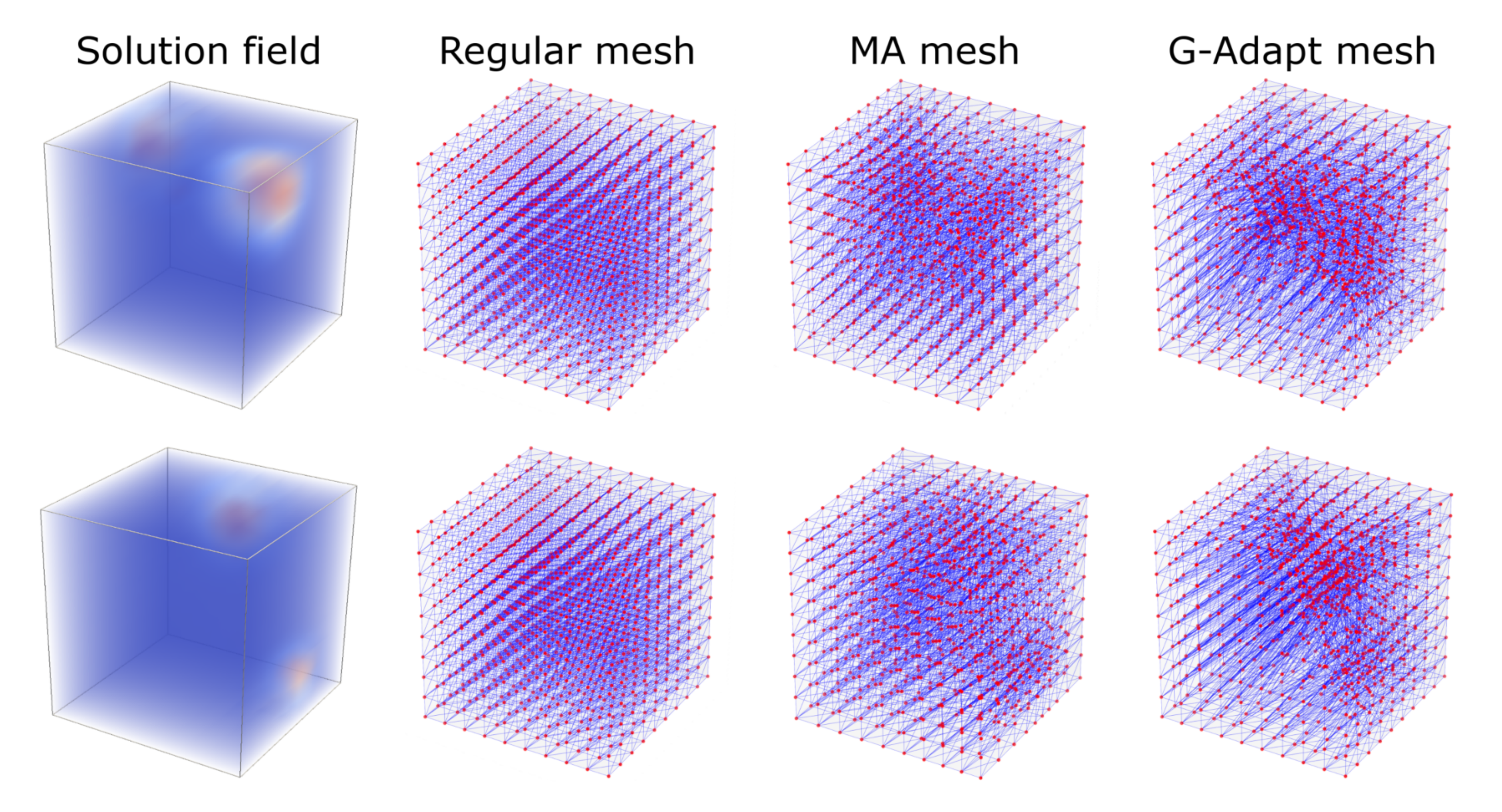}}
\caption{Examples of 3D solution fields and adapted meshes.}
\label{fig:3d_examples}
\end{center}
\vskip -0.4in
\end{figure}

\subsection{Scalability of the G-Adaptivity framework}\label{sec:gadapt_scale}

The G-Adaptivity framework is able to scale to very large meshes. In particular the inductive learning property of GNNs ensures the ability of GNNs to transfer to unseen graphs in this case meaning we can perform super-resolution. In Table \ref{tab:gnn_scale} we report experiments where the model is trained on 15x15 mesh and inference is performed on larger 60x60 (3,600 nodes) and 150x150 (22,500 nodes) meshes for the Poisson problem with 128 sampled Gaussians (see Figure \ref{fig:large_mesh}). In order to scale the transformer encoder, which in naive form scales with $\mathcal{O}(N^2)$ edges we use a sliding window SWIN \cite{LiuL00W0LG21} style transformer to capture the monitor function embedding at the mid-length scales. Our model consistently achieved significant mesh adaptation, accuracy improvement, and computational acceleration compared to Monge-Amp\'ere, matching the performance observed on smaller-scale experiments.

\vspace{-0.5cm}
\begin{figure}[h!]
\vskip 0.2in
\begin{center}
\centerline{\includegraphics[width=\columnwidth]{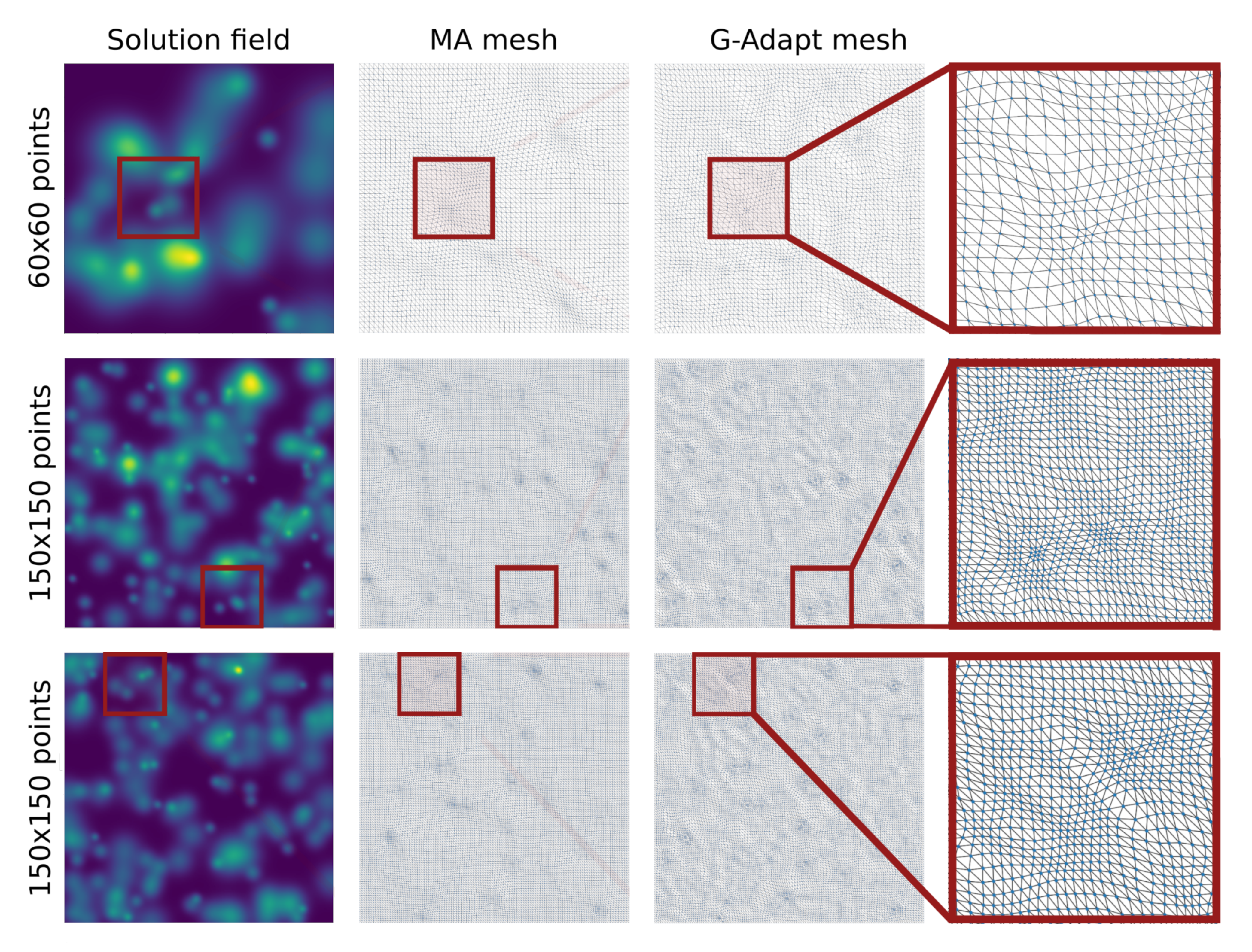}}
\caption{G-Adaptivity on large-scale fine meshes}
\label{fig:large_mesh}
\end{center}
\vskip -0.4in
\end{figure}

\section{Conclusions and future work} We have presented a novel, and effective, approach to the classical problem of $r$-adaptive meshing for FEM solutions of PDEs. In particular, we demonstrate, that GNNs together with a differentiable FEM solver (Firedrake), and a loss function given by the regularized solution error, can be effectively used to optimise the location of mesh points to minimise the FEM error. Hence we can take an entirely different route from prior work (both classical and ML approaches) which determine good choices of mesh points by analysis-inspired heuristics using a location based approach. We demonstrate the advantages of our method on challenging test problems in two dimensions, including in a multiply connected domain, and find that, on those examples, we are able to outperform both classical and ML methods in terms of error reduction while retaining similar computational cost to prior ML work.  We note that the direct FEM error optimisation approach extends naturally to more complex domains, and PDEs, where classical methods may struggle providing a basis for future extensions of this work.

{Finally, we note that any machine learning-based approach is inherently statistical in nature, meaning that GNN-based meshing tools are likely to perform worse on out-of-distribution data. We observed this in our experiments with both pre-trained UM2N models and our own G-Adaptive approach when applied to PDEs whose solutions exhibited markedly different scales and features from those seen during training. Enhancing the scale-generalisation capabilities of ML-based adaptive meshing therefore remains an important open problem for future investigation.}
\section*{Acknowledgements}

{The authors would like to thank Patrick Farrell and David Ham for helpful advice related to capabilities of Firedrake, as well as Joseph G. Wallwork and Mingrui Zhang for sharing code and advice on the use of the UM2N baseline model and the Python Movement package.} JR, GM, TD, CBS \& CB gratefully acknowledge support from the EPSRC programme grant in `The Mathematics of Deep Learning', under the project EP/V026259/1. GM gratefully acknowledges funding from the Mathematical Institute, University of Oxford. KS gratefully acknowledges funding from the European Research Council (ERC) under the European Union’s Horizon 2020 research and innovation programme (grant agreement No. 850941).

\section*{Impact Statement}

This paper presents work that aims to accelerate and improve the performance of mesh relocation methods for FEM using Machine Learning. FEMs are omnipresent in scientific computing and the generation and adaption of an effective mesh is paramount to any large-scale application of FEMs. The potential applications and benefits of advancements in this field are thus significant as complex nonlinear PDEs appear everywhere in nature: from
weather and climate forecasting over oceanography up to general relativity.


\bibliography{manual_biblio,GNN_meshing}
\bibliographystyle{icml2025}

\newpage
\appendix
\onecolumn
\section{Implementation details}

\subsection{Diffusion deformer (diffformer) details}\label{app:diffdeformer}
We apply the diffformer in learned blocks 
\begin{align}\label{eqn:diffformer_block}
    \mathcal{D}_{\theta}^{(b)}(\mathbf{X}) = \left( \prod_{n=0}^{T_b} (I + dt (\mathbf{A}_{\theta}^{(b)}(\mathbf{X}^{(b)}) - I) ) \right) \mathbf{X}^{(b)}.
\end{align}

where \( n = 0, \dots, T_i/dt \) denotes the discrete time step index, \( N_b \) is the number of blocks in the deformer,
such that 
\( \mathbf{A}_{\theta}^{(b)}(\mathbf{X}) \) is the attentional adjacency matrix at block \( b \), dynamically learned as:
  \[
  a_{ij}^{(b)} = \frac{\exp(\phi_{\theta}^{(b)}(\mathbf{X}_i, \mathbf{X}_j))}{\sum_{k \in \mathcal{N}(i)} \exp(\phi_{\theta}^{(b)}(\mathbf{X}_i, \mathbf{X}_k))}.
  \]

Then the full G-adaptivity diffusion based deformation Map is given by
\begin{align}\label{eqn:diffformer_multi_step}
    \mathcal{M}_{\theta}(\mathbf{X}_0, \mathbf{A}) = \left( \prod_{b=0}^{N_b} \mathcal{D}_{\theta}^{(b)} \right) \mathbf{X}^{(n)},
\end{align}

The process consists of:
1. Initializing the feature matrix \( \mathbf{X}^{(0)}\) as the feature positions.
2. Looping over \( N_b \) deformer blocks, updating positions iteratively.
3. Applying \( T_i/dt \) steps of discrete evolution to refine the mesh over time.

The input feature matrix is $\mathbf{X}_0=\left(\boldsymbol{\xi} \mathbin\Vert \textbf{h}\right) \in \mathbb{R}^{N_x\times d+|\lambda|}$ utilises the graph transformer encoder of \cite{zhang_UM2N_2024} with the exact same hyperparameters. Similarly each attentional matrix $\mathbb{A}_{\theta}^{(b)}$ is adapted from the same. We use $N_b=4$ blocks and rollout using explicit Euler time integration for 32 timesteps with a step size of 0.1.

\subsection{Hessian recovery}\label{app:hessian}
To identify parts of the domain $\Omega$ where the solution varies rapidly in space, we use an estimator for the local Hessian $H(x,y)$ which is inspired by the approach in \cite{picasso2011numerical}. For a piecewise linear function $u\in V(\Omega)$ an approximation of the components of $H$ is obtained by solving the weak problem
\begin{equation}
  -\int_\Omega \partial_i u \partial_j v \;dx = \int_\Omega H_{ij}v\;dx \quad\text{for all $v\in V(\Omega)$, $v|_{\partial\Omega}$}
\end{equation}
for $H_{ij}$ subject to the strong Dirichlet boundary condition $H_{ij}|_{\partial\Omega}=0$. While there might be other Hessian recovery techniques (see e.g. \cite{vallet2007numerical}), we observe empirically that our approach leads to good results if the Frobenius norm $||H||_F=\sqrt{\sum_{i,j}H_{ij}^2}$ is fed as an input to the GNN.
\subsection{Gaussian basis function expansion for time-dynamic training}\label{app:gaussian_expansions}
For technical reasons, in the Navier Stokes dataset it was necessary to provide the initial conditions used for training in analytical form as an UFL \cite{alnaes2014unified} expression that can be fed to Firedrake. To achieve this, snapshots of the pressure and velocity fields are taken at specified times during the numerical solution of the time-dependent Navier Stokes equations. The fields $w(x,y)$ are approximated as a sum of Gaussian basis functions in the form
\begin{equation}
w_\text{GBF}(x,y) = \sum_{ij} a_{ij} \phi(x-x_i,y-y_j)\quad \text{with $\phi(x,y) = \exp\left[-\frac{1}{2}\left(\frac{x^2}{h_x^2}+\frac{y^2}{h_y^2}\right)\right]$}\label{eqn:GBF_expansion}
\end{equation}
where the $n_x\times n_y=8\times 8$ nodal points $(x_i,y_j)$ are arranged in a regular Cartesian grid over the domain with grid spacings $h_x$ and $h_y$. The expansion coefficients are chosen such that $w_\text{GBF}(x_i,y_j)=w(x_i,y_j)$. The sum on the right hand side of \eqref{eqn:GBF_expansion} can be implemented as an UFL expression. 
\section{{Notes on the use of Firedrake in G-Adaptivity}}
Firedrake \cite{FiredrakeUserManual} is a Python framework for the automatic solution of finite element problems. The central design idea based on composable abstractions, which allow the expression of the partial differential equation in weak form at a high level in Unified Form Language (UFL) \cite{alnaes2014unified}. This abstraction is gradually lowered to generate C-kernels for matrix-assembly that can be executed in grid traversal with PyOP2 \cite{rathgeber2012pyop2}. PETSc \cite{balay2019petsc} provides a wide range of linear- and non-linear solvers for the resulting linear algebra problem. Firedrake supports a broad collection of finite element discretisations and dolfin-adjoint \cite{mitusch2019dolfin} allows the automatic construction of the adjoint problem for a given forward equation. The recently added interface to PyTorch \cite{bouziani_physics-driven_2023} is crucial for the work in this paper.

\subsection{Additional details on implementation}
{Training the GNN requires computing the derivative of the loss function $E(Z,U_Z)$ with respect to node coordinates $Z$. Since $E(Z,U_Z)$ is a PDE-constrained functional, it is necessary to use adjoint models to compute these derivatives efficiently. The derivative and adjoint formulas depend on the loss function and its PDE constraints and automating their derivation is crucial to develop a general $r$-adaptivity methodology that can be trained seamlessly on different test cases. Firedrake is the perfect tool for this because it can derive adjoint models \cite{farrell2013automated,mitusch2019dolfin} and automatically compute derivatives of $E(Z,U_Z)$ with respect to node coordinates \cite{ham2019automated}. Deriving these formulas by hand is nontrivial, tedious, and error prone. Firedrake is fully integrated with PyTorch \cite{bouziani2024differentiable}, and this is key to formulate hybrid FEM-torch architectures required to train the GNN. Implementing our approach in Firedrake required minimal adaptations: the GNN model must conform to the Firedrake external operator API, and a term must be added to the derivatives with respect to node coordinates when $E(Z,U_Z)$ comprises a finite element solution computed on a finer mesh.}

{As an example consider the shape derivative $dJ(Z,U_Z)[T]$ of the functional $J(Z,U_Z) = ||U_Z||^2_{L_2(\Omega)}$, which is a simplified version of $E(Z,U_Z)$ in \eqref{eqn:definition_of_FEM_error}. The constraint on $U_Z$ is given by the simplest testcase: the weak Poisson equation in Appendix~\ref{app:poisson} with $f=4$. With Firedrake and PyAdjoint, we can compute $dJ(Z,U_Z)[T]$ as follows:}

\begin{verbatim}
mesh = UnitSquareMesh(3, 3)
continue_annotation()
Q = mesh.coordinates.function_space()
T = Function(mesh.coordinates.function_space())
mesh.coordinates.assign(mesh.coordinates + T)
V = FunctionSpace(mesh, “CG”, 1) 
u = Function(V)
v = TestFunction(V)
solve((dot(grad(u),grad(v))-4*v)*dx==0, u, bcs=DirichletBC(V, 0, “on_boundary”))
J = assemble(u**2*dx)
Jred = ReducedFunctional(J, Control(T))
Jred.derivative()
\end{verbatim}

{Crucially, this only requires the implementation of the forward constraint equation in Appendix~\ref{app:poisson}. On the other hand, a tedious manual derivation of $dJ(Z,U_Z)[T]$ leads to 
$dJ(Z,U_Z)[T] = \int_\Omega (U_Z^2+\nabla U_Z\cdot \nabla p - 4p) \nabla\cdot T - \nabla U_Z (DT+DT^\top)\nabla p \; dx$ with $p$ being the (weak) solution of the adjoint equation $\Delta p = 2U_Z$. These formulae are problem dependent and will be significantly more complicated for other PDE constraints. For test cases such as the Navier Stokes equations in Appendix~\ref{app:navierstokes} this approach quickly becomes intractable, as highlighted in (\citealp[p. 1818]{ham2019automated}).}

In contrast, adapting the code above to the problems described in Appendix~\ref{app:burgers} \& \ref{app:navierstokes} requires only minor changes.

\section{Mathematical description of the numerical experiments}\label{app:exp_details}
\subsection{Poisson's equation}\label{app:poisson}
Poisson's equation $\; -\nabla^2 u = f({\mathbf z})$ is solved using the Finite Element method in the two-dimensional convex domain ${\mathbf z} \in \Omega\subset \mathbb{R}^2$. We use the weak formulation \eqref{eqn:weak_form_poisson} and seek piecewise linear functions $u\in S_{\mathcal{Z}}$ with Dirichlet Boundary conditions $u|_{\partial\Omega}=0$ such that
\begin{align}
\int_\Omega\nabla u\cdot \nabla v\;dx=\int_{\Omega}fv\;dx
\qquad\text{for all test functions $v\in S_{\mathcal{Z}}, v|_{\partial\Omega}=0$.}
\end{align}

\subsection{Burgers' equation}\label{app:burgers}
The non-linear viscous Burgers' equation describes the evolution of the vector- valued velocity field $\mathbf{u}$ as
\begin{align}
\label{eqn:burgers_strong}
\frac{\partial \mathbf{u}}{\partial t} + (\mathbf{u} \cdot \nabla) \mathbf{u} - \nu \nabla^2 \mathbf{u} = 0\qquad\text{in $\widetilde{\Omega}$},
\end{align}
where \( \nu > 0 \) is the kinematic viscosity and we solve consider a two-dimensional rectangular domain $\widetilde{\Omega}\subset \mathbb{R}^2$. The term \( (\mathbf{u} \cdot \nabla) \mathbf{u} \) describes non-linear convection and \( \nu \nabla^2 \mathbf{u} \) is the viscous diffusion.

We use a piecewise linear Finite Element discretisation with $\mathbf{u}\in S_{\mathcal{Z}}^{d-1}$. A simple backward-Euler timestepping method with step-size $\Delta t$ is employed to compute the velocity $\mathbf{u}_{n+1}\in S_{\mathcal{Z}}^{d-1}$ at the next timestep from the current velocity $\mathbf{u}_{n}\in S_{\mathcal{Z}}^{d-1}$
The time-discretised weak form of \eqref{eqn:burgers_strong} is given by: find $\mathbf{u}_{n+1}\in S_{\mathcal{Z}}^{d-1}$ such that
\begin{equation}
\label{eqn:burgers}
\int_{\Omega} \left( \frac{\mathbf{u}_{n+1} - \mathbf{u}_n}{\Delta t} \cdot \mathbf{v} 
+ (\mathbf{u}_{n+1} \cdot \nabla \mathbf{u}_{n+1}) \cdot \mathbf{v} 
+ \nu \nabla \mathbf{u}_{n+1} : \nabla \mathbf{v} \right) \, dx = 0
\end{equation}
for all test functions $\mathbf{v}\in S_{\mathcal{Z}}^{d-1}$.
The final two terms in \eqref{eqn:burgers} are the weak form of the nonlinear advection and viscous diffusion term respectively.
\subsection{The Navier--Stokes Equations}\label{app:navierstokes}

We consider the incompressible Navier-Stokes equations in primitive form
for a time-dependent velocity field $\mathbf{u}$ and pressure $p$ in the two-dimensional spatial domain $\widetilde{\Omega}=[0,2.2]\times[0,0.41]$:
\begin{align}
    \frac{\partial \mathbf{u}}{\partial t} + (\mathbf{u} \cdot \nabla) \mathbf{u} - \nu \nabla^2 \mathbf{u} + \nabla p &= \mathbf{f}, \quad \text{in } \widetilde{\Omega}, \\
    \nabla \cdot \mathbf{u} &= 0, \quad \text{in } \widetilde{\Omega},
\end{align}
Here $\nu > 0$ is again the kinematic viscosity and \( \mathbf{f} \) is an external force term.

The Finite Element discretisation uses Taylor-Hood elements with piecewise linear pressure and vector-valued piecewise quadratic velocity functions $(\mathbf{u},p)\in Q_{\mathcal{Z}}^{d-1}\times S_{\mathcal{Z}}$. The time-stepping procedure, which computes the velocity $\mathbf{u}_{n+1}\in Q^{d-1}_{\mathcal{Z}}$ and pressure $p_{n+1}\in S_{\mathcal{Z}}$ at the next timestep from the current velocity $\mathbf{u}_{n}\in Q^{d-1}_{\mathcal{Z}}$, is a variant of Chorin's projection method \cite{chorin1967numerical,chorin1968numerical}. It consists of three steps, each of which requires the solution of a weak problem.

\paragraph{Step 1: Compute tentative velocity \( \mathbf{u}^* \)}

Find \( \mathbf{u}^* \in Q^{d-1}_{\mathcal{Z}}\) such that:

\begin{equation}
\label{eqn:tentative_velocity}
    \int_{\Omega} \left( \frac{\mathbf{u}^* - \mathbf{u}_n}{\Delta t} \cdot \mathbf{v} 
    + (\mathbf{u}_n \cdot \nabla \mathbf{u}_{\text{mid}}) \cdot \mathbf{v} 
    + \nu \nabla \mathbf{u}_{\text{mid}} : \nabla \mathbf{v} \right) \, dx
    + \int_{\partial \Omega} \left( p_n \mathbf{n} \cdot \mathbf{v} 
    - \nu (\nabla \mathbf{u}_{\text{mid}} \cdot \mathbf{n}) \cdot \mathbf{v} \right) \, ds
    = \int_{\Omega} \mathbf{f} \cdot \mathbf{v} \, dx.
\end{equation}
for all piecewise quadratic vector-valued test functions ${\mathbf v}\in Q_{\mathcal{Z}}^{d-1}$ where \( \mathbf{u}_{\text{mid}} = \frac{1}{2} (\mathbf{u}_n + \mathbf{u}^*) \). Homogeneous Dirichlet boundary conditions are applied at the top ($y=0.41$) and bottom ($y=0$) of the domain. The velocity field is prescribed on the inflow boundary at the left side of the domain as
$\mathbf{u}(x=0,y) = \left(4.0 \cdot 1.5 \cdot y \cdot \frac{0.41 - y}{0.41^2}, 0\right)$. The weak problem in \eqref{eqn:tentative_velocity} is solved with a GMRES iteration that is preconditioned with successive overrelaxation (SOR).

\paragraph{Step 2: Solve for pressure correction}

To ensure that the velocity field at the next timestep is divergence-free, find \( p_{n+1} \in S_{\mathcal{Z}}\) which satisfies the following elliptic problem:

\begin{equation}
\label{eqn:pressure_solve}
    \int_{\Omega} \nabla p_{n+1} \cdot \nabla q \, dx
    = \int_{\Omega} \nabla p_n \cdot \nabla q \, dx
    - \frac{1}{\Delta t} \int_{\Omega} (\nabla \cdot \mathbf{u}^*) q \, dx
\end{equation}
for all piecewise linear pressure test functions $q\in S_{\mathcal{Z}}$. To deal with the fact that the pressure is only determined up to an additive constant, homogeneous Dirichlet boundary conditions are applied to $p_{n+1}, q$ at the outflow boundary. The weak problem in \eqref{eqn:pressure_solve} is solved with a conjugate gradient iteration preconditioned with algebraic multigrid (AMG).
\paragraph{Step 3: Update velocity}

Find \( \mathbf{u}_{n+1} \in Q_{\mathcal{Z}}^{d-1}\) such that:

\begin{equation}
\label{eqn:velocity_update}
    \int_{\Omega} \mathbf{u}_{n+1} \cdot \mathbf{v} \, dx
    = \int_{\Omega} \mathbf{u}^* \cdot \mathbf{v} \, dx
    - \Delta t \int_{\Omega} \nabla (p_{n+1} - p_n) \cdot \mathbf{v} \, dx.
\end{equation}
for all piecewise quadratic vector-valued test functions ${\mathbf v}\in Q_{\mathcal{Z}}^{d-1}$. The weak problem in \eqref{eqn:velocity_update} is solved with a conjugate gradient iteration preconditioned with SOR.

\section{Further details of numerical experiments}\label{sec:additional_experiments}

\subsection{Model and data hyperparameters}

Table \ref{tab:data_setup} shows for each PDE and geometry the number of train and test set samples as will as the resolution or node count for the train, test dataset and evaluation mesh.

\begin{table}[h]
    \centering
    \renewcommand{\arraystretch}{1.2}
    \begin{tabular}{l|c|c|c|c}
        \hline
        \textbf{PDE} & \textbf{Poisson} & & \textbf{Burgers} & \textbf{Navier-Stokes} \\
        \hline
        \textbf{Domain} & Square & Polygonal & Square & Cylinder \\
        \textbf{Train/Test Samples} & 100/100 & 100/100 & 100/100 & 25/50 \\
        \textbf{Train Resolution} & [15x15, 20x20] & 114 nodes & [15x15, 20x20] & 201 nodes \\
        \textbf{Test Resolution} & [12x12,...,23x23] & 114 nodes & [12x12,...,23x23] & 201 nodes \\
        \textbf{Eval Resolution} & 100x100 & 228 nodes & 100x100 & 402 nodes \\
        \hline
    \end{tabular}
    \caption{Summary of PDE problem setups, including domains, sample sizes, and training/testing/evaluation resolutions.}
    \label{tab:data_setup}
\end{table}
\newpage\subsection{Additional results in the Poisson square and polygon case}

In Figures \ref{fig:poisson_meshes_1} and \ref{fig:poisson_meshes_2} we present additional plots from the Poisson experiments on the square and polygonal domain detailed in the main paper exhibiting the types of meshes generated with our novel G-Adaptive methodology.
\begin{figure}[h!]
    \centering
    \renewcommand{\arraystretch}{0.85}
    \setlength{\tabcolsep}{2pt}
    \begin{tabular}{cc}
        \includegraphics[width=0.46\textwidth]{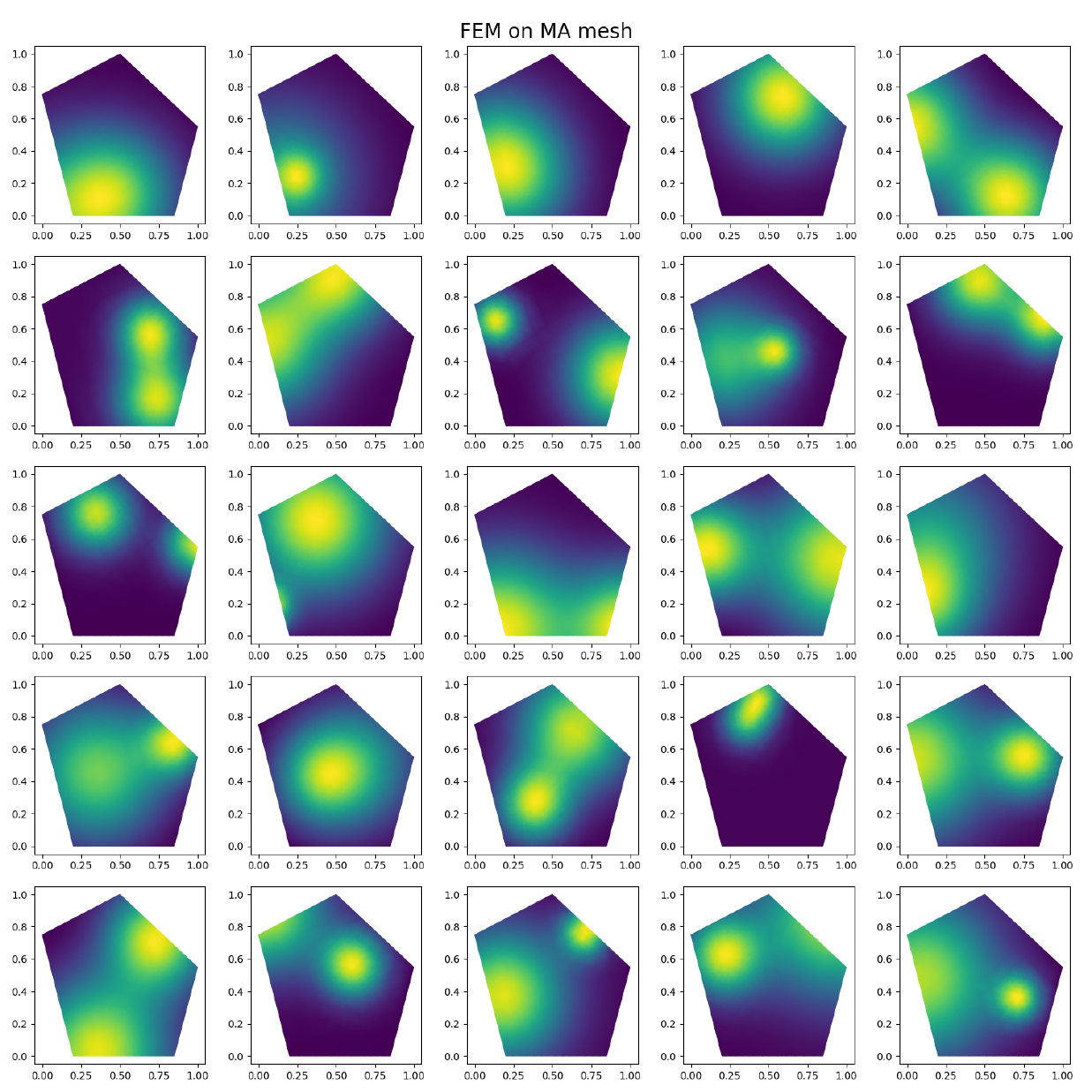} & 
        \includegraphics[width=0.46\textwidth]{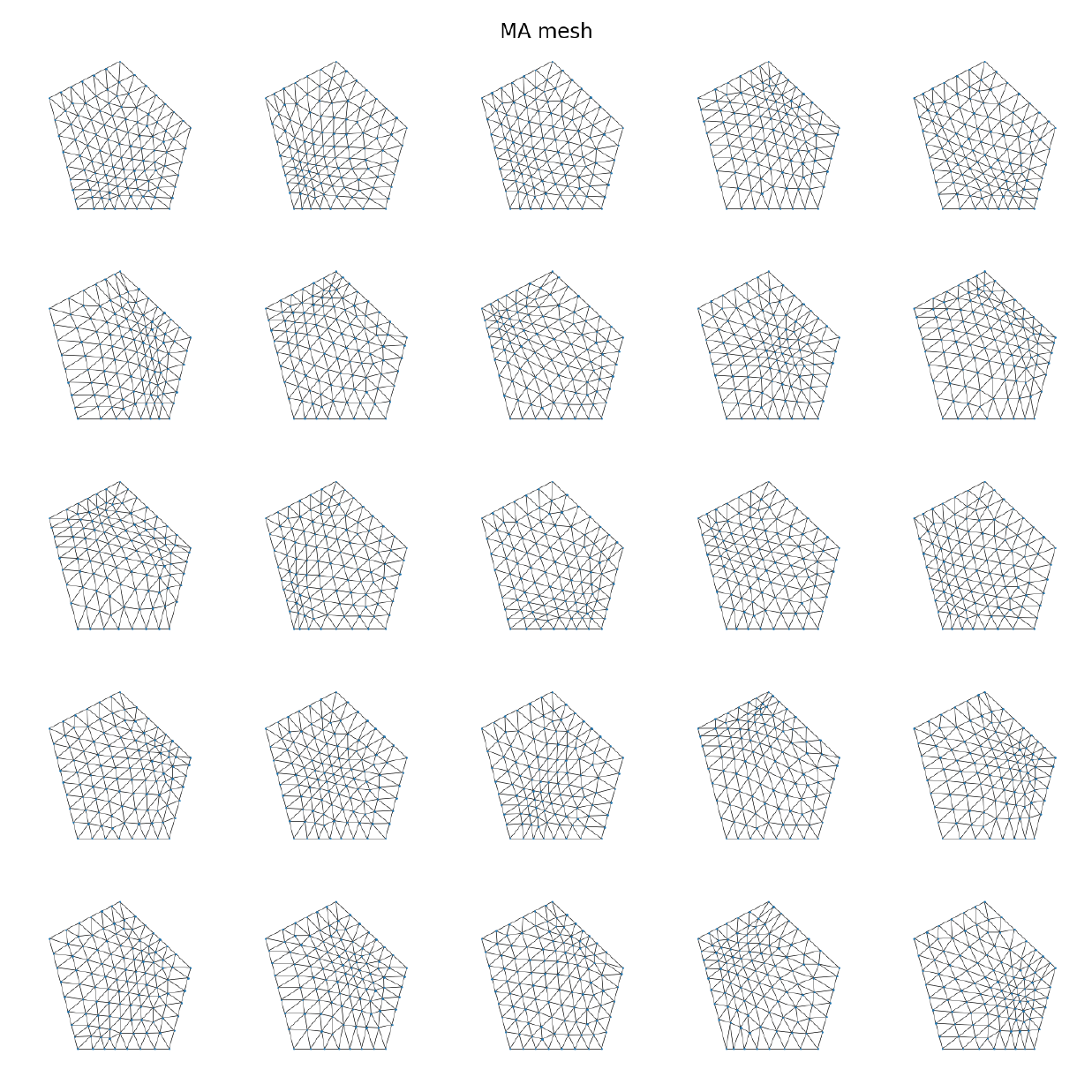} \\
        \includegraphics[width=0.46\textwidth]{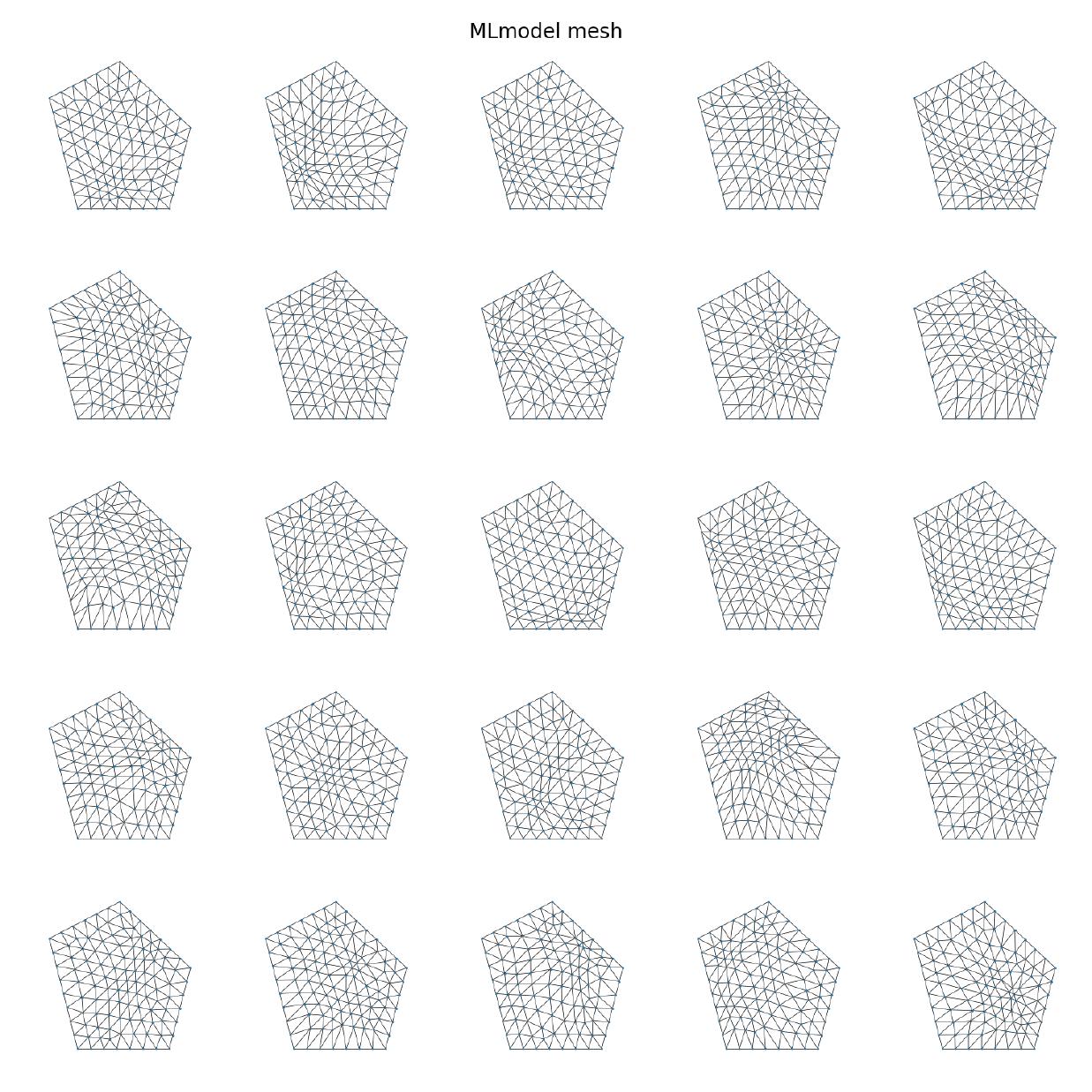} & 
        \includegraphics[width=0.46\textwidth]{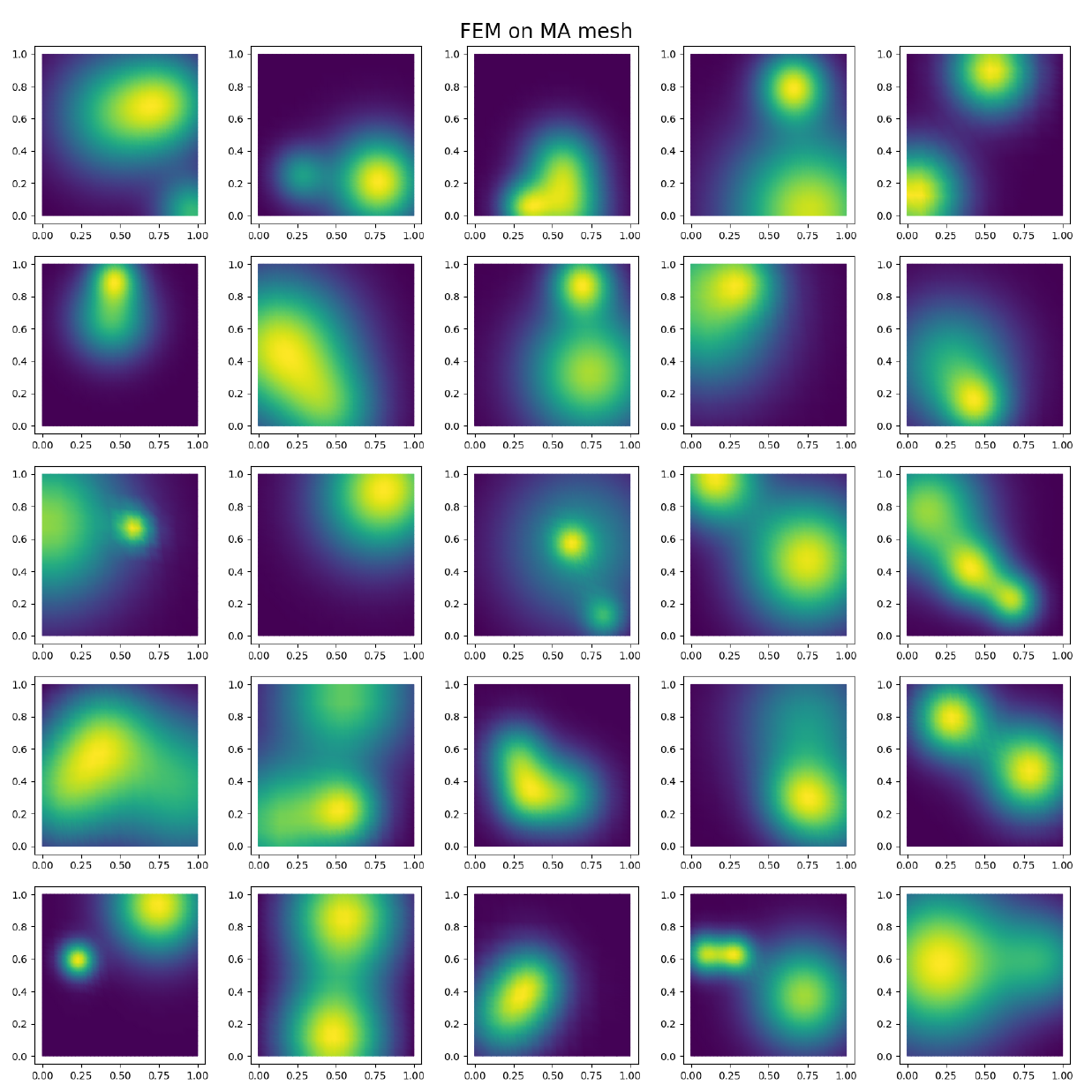} \\
    \end{tabular}
    \caption{Comparison of MA and ML model-generated meshes for Poisson problems on square and polygonal domains}
    \label{fig:poisson_meshes_1}
    \vspace{-5mm}
\end{figure}\newpage

\begin{figure}[h!]
    \centering
    \renewcommand{\arraystretch}{0.83}
    \setlength{\tabcolsep}{2pt}
    \begin{tabular}{cc} 
        \includegraphics[width=0.46\textwidth]{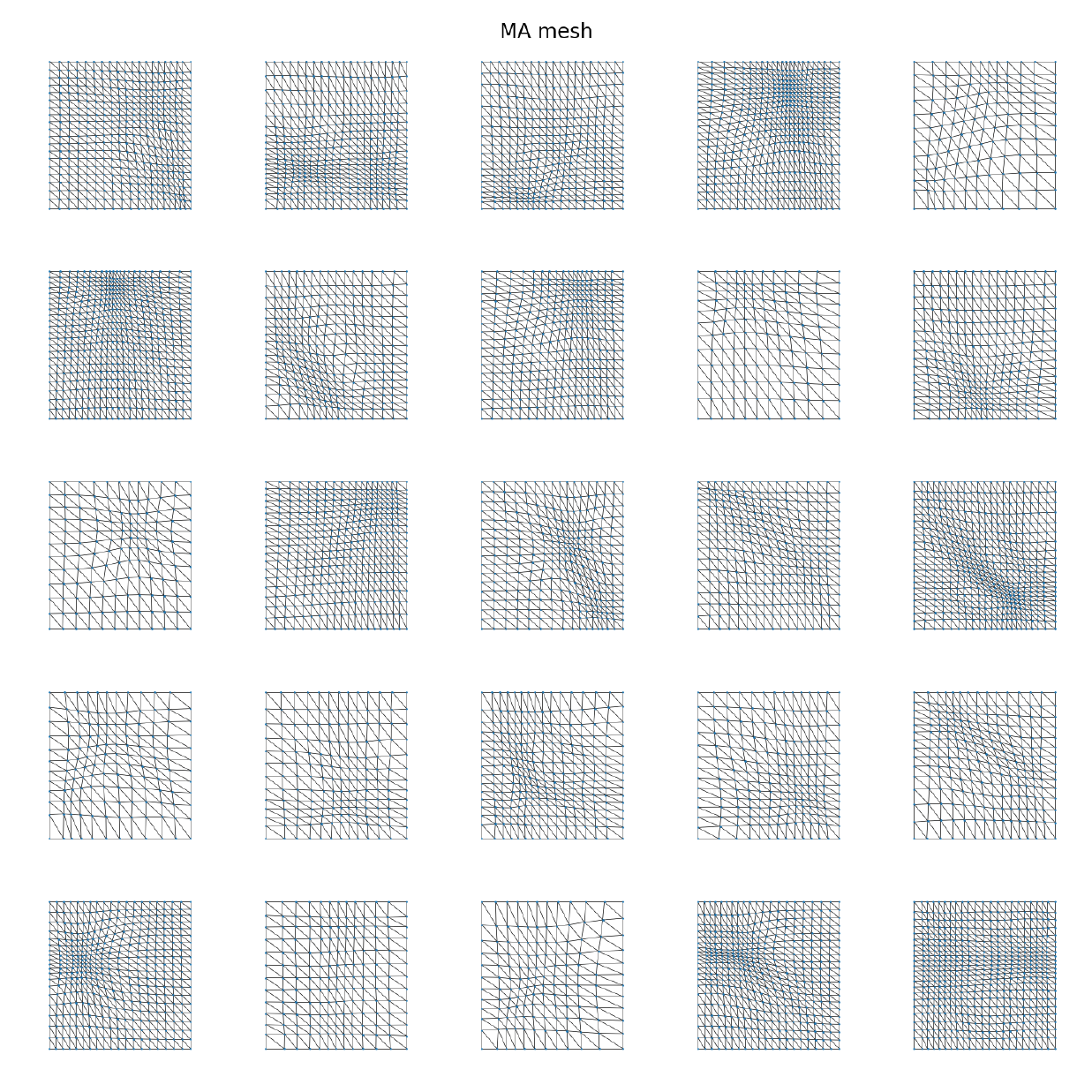} &
        \includegraphics[width=0.46\textwidth]{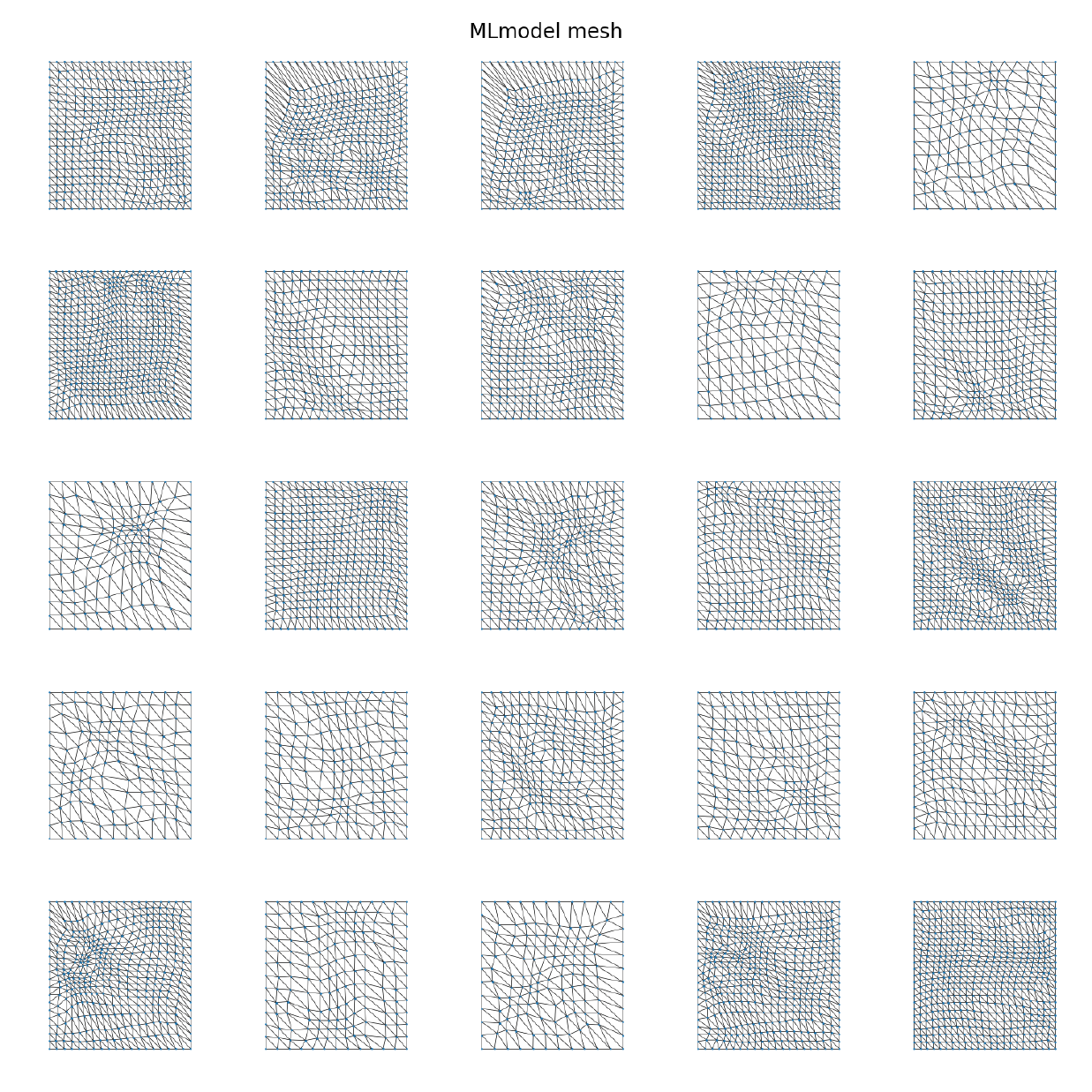}\\
    \end{tabular}\vspace{-0.3cm}
    \caption{Comparison of MA and ML model-generated meshes for Poisson problems on square and polygonal domains}
    \label{fig:poisson_meshes_2}
    \vspace{-5mm}
\end{figure}

\subsection{Additional examples of Burger's evolution}\label{sec:additional_burgers}
In Figure~\ref{fig:additional_burgers_trajectories} we include some additional mesh trajectories from the experiment performed in Section~\ref{sec:burgers_benchmark}.

\begin{figure*}[h!]
\vskip 0.2in
\begin{center}
\centerline{\includegraphics[width=0.64\linewidth]{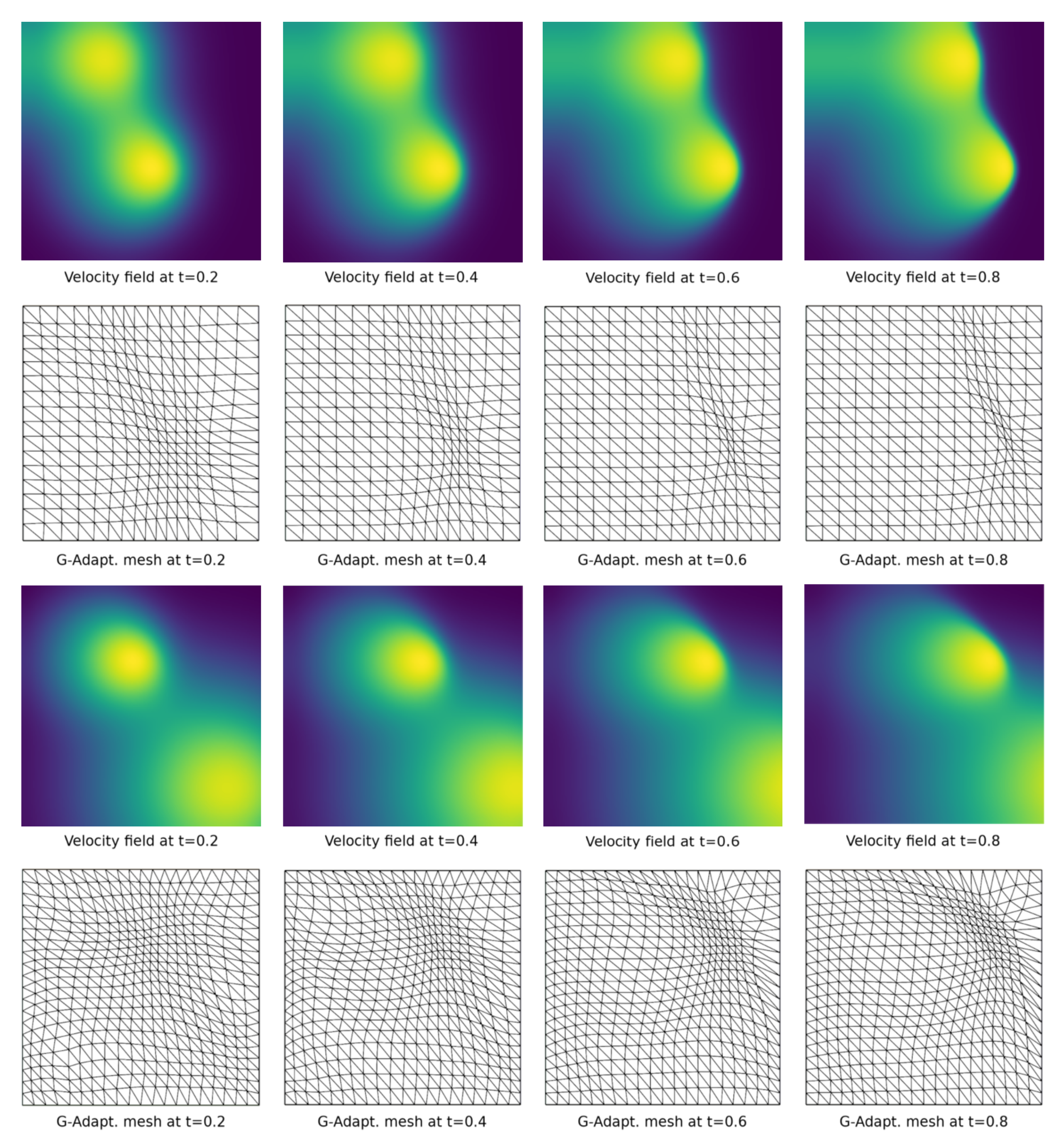}}
\caption{Snapshots of the velocity field (x-component) together with the corresponding deformed meshes provided by Monge--Ampère.}
\label{fig:additional_burgers_trajectories}
\end{center}
\vskip -0.4in
\end{figure*}

\newpage\subsection{Further experiments on non-convex domains}\label{sec:benchmarking_nonconvex}
While section~\ref{sec:benchmarking_navierstokes} already contains an example of a non-convex domain we provide further evidence that our method extends to this case using domain data from UM2N \cite{zhang_UM2N_2024}. In particular we conducted experiments similar to the setup of \ref{sec:benchmarking_poisson} on five non-convex domains, cf. Figure~\ref{fig:non-convex_example}. On each domain we solve Poisson's equation for randomly sampled Gaussian solutions with 100 training datapoints and 100 unseen test datapoints.

The results (error reduction scores are listed in in Table~\ref{tab:non-convex_benchmark_table} and the full results can be seen in Figure~\ref{fig:non-convex_example}) confirm that our method performs robustly on non-convex geometries, achieving significantly greater error reduction than baselines and generating regular non-tangled meshes on all tested domains, succeeding even when some other approaches fail. Note that the UM2N results reported below were obtained using the pretrained model from the UM2N repository, since the MA meshes obtained using \cite{wallwork} were unsuitable for direct training in these cases.

\begin{table}[h!]
\caption{Error reduction scores (\%) on Poisson's equation in non-convex domain.}
\label{tab:non-convex_benchmark_table}
\vskip 0.15in
\begin{center}
\begin{small}
\begin{sc}
\begin{tabular}{lcccc}
\toprule
\textbf{Domain} & $\text{\textbf{MA}}^*$   & $\text{\textbf{UM2N}}^\dagger$ & \textbf{UM2N-G} & \textbf{G-Adapt} \\
\midrule
Geometry 1    & 0.23 $\pm$ 0.00  & $-76.85 \pm 0.00$ & 1.92 $\pm$ 0.02  & \textbf{7.97 $\pm$ 0.04} \\
Geometry 2    & $-1.00 \pm 0.00$ & $-83.88 \pm 0.00$ & 0.69 $\pm$ 0.08  & \textbf{8.88 $\pm$ 0.24} \\
Geometry 3    & --              & $-75.82 \pm 0.00$ & $-0.96 \pm 0.04$ & \textbf{6.62 $\pm$ 0.09} \\
H-Geometry    & $-108.31 \pm 0.00$ & $-73.59 \pm 0.00$ & $-0.92 \pm 0.00$ & \textbf{7.51 $\pm$ 0.00} \\
L-Geometry    & $-89.40 \pm 0.00$ & $-138.43 \pm 0.00$ & 13.94 $\pm$ 1.18 & \textbf{16.25 $\pm$ 0.25} \\
\bottomrule
\end{tabular}
\end{sc}
\end{small}
\end{center}
$*$ Monge--Amp\`ere solvers in general struggle with non-convex domains. $\dagger$ Since the MA data available is not suitable for accurate training we use the pretrained model from \cite{zhang_UM2N_2024} for our evaluation.
\vskip -0.1in
\end{table}

\begin{figure*}[h!]
\vskip 0.2in
\begin{center}
\centerline{\includegraphics[width=0.9\linewidth]{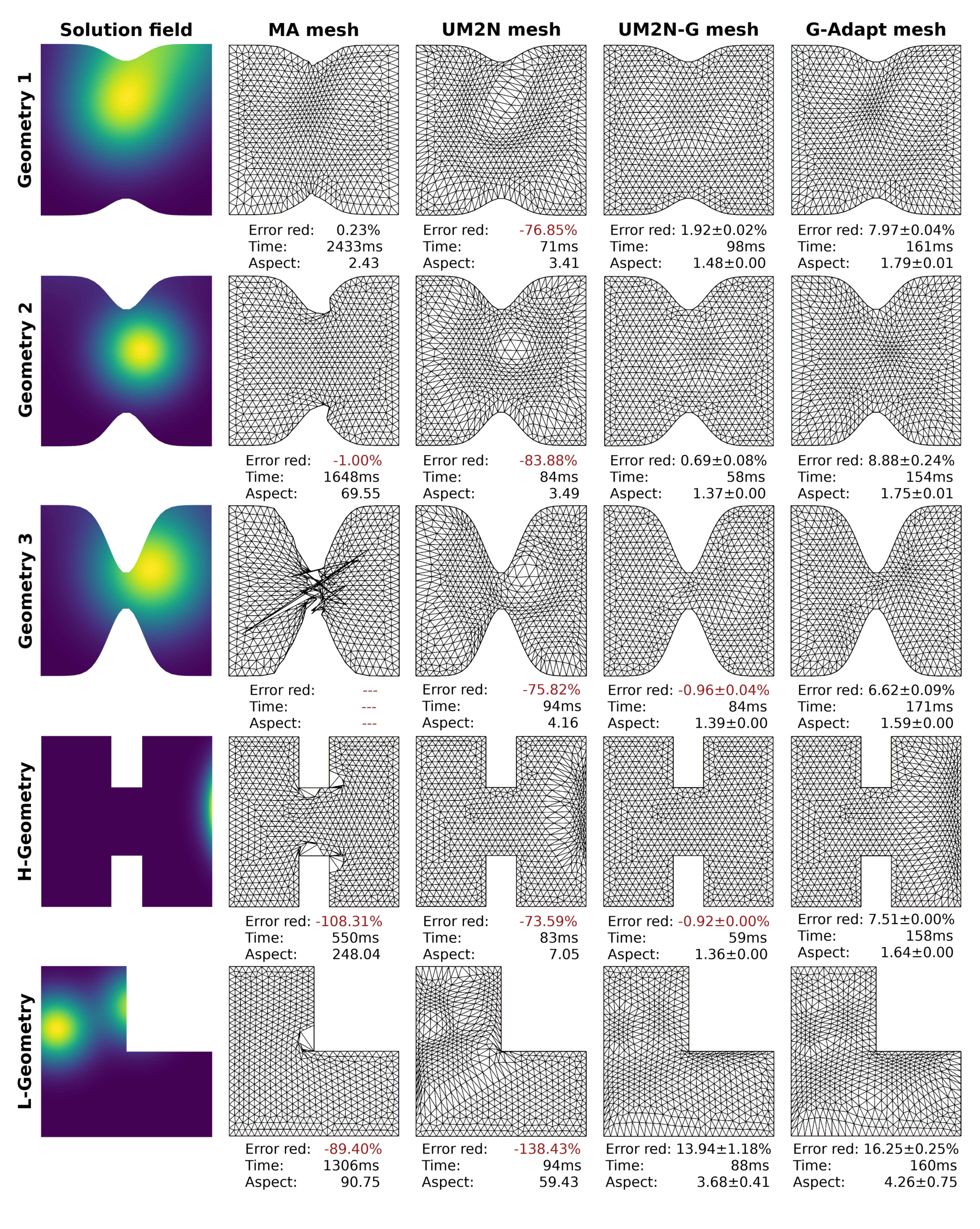}}
\caption{{Benchmarking results for Poisson's equation on non-convex geometries.}}
\label{fig:non-convex_example}
\end{center}
\vskip -0.4in
\end{figure*}

\subsection{Model hyper-parameter sensitivity analysis}

We have performed extensive sensitivity studies and found that our approach is robust to the particular choice of hyperparameters for the diffformer blocks. We used $N_b=4$ blocks and rollout using explicit Euler time integration for 32 timesteps with a step size of 0.1. It should be noted that the hyperparameters were identical in all experiments performed in the paper and did not require finetuning to the specific problem. Tables \ref{tab:abl_time_perf} and \ref{tab:abl_time_time} show the sensitivity of the model to the diffusion parameters in terms of error reduction and inference time.

\begin{table}[h!]
\caption{The effect of pseudotime and diffusion timesteps on the Error reduction.}
\vskip 0.15in
\begin{center}
\begin{small}
\begin{sc}
\begin{tabular}{lcccccc}
\toprule
\multicolumn{7}{c}{\textbf{Diffusion Time and time-step performance sensitivity}} \\
\midrule
        $d\tau \backslash$ No.-timesteps & 2 & 4 & 8 & 16 & 32 & 64 \\ \hline
        0.05 & 10.41 & 16.60 & 18.95 & 15.85 & 21.82 & 22.93 \\ \hline
        0.1 & 12.97 & 14.52 & 20.68 & 15.71 & 20.27 & 20.85 \\ \hline
        0.25 & 19.54 & 19.11 & 22.30 & 19.94 & 23.11 & 22.09 \\ \hline
        0.5 & 20.43 & 22.65 & 22.14 & 22.42 & 21.32 & 21.92 \\ \hline
        1 & 20.60 & 21.57 & 21.16 & 22.10 & 19.71 & 19.40 \\
\bottomrule
\end{tabular}
\label{tab:abl_time_perf}
\end{sc}
\end{small}
\end{center}
\end{table}

\begin{table}[h!]
\caption{The effect of pseudotime and diffusion timesteps on inference time (ms).}
\vskip 0.15in
\begin{center}
\begin{small}
\begin{sc}
\begin{tabular}{lcccccr}
\toprule
\multicolumn{7}{c}{\textbf{Diffusion Time and time-step inference time}} \\
\midrule
        $d\tau \backslash$ No.-timesteps & 2 & 4 & 8 & 16 & 32 & 64 \\ \hline
        0.05 & 60 & 44 & 46 & 116 & 65 & 247 \\ \hline
        0.1 & 54 & 42 & 79 & 61 & 86 & 208 \\ \hline
        0.25 & 41 & 40 & 48 & 56 & 119 & 108 \\ \hline
        0.5 & 50 & 58 & 49 & 92 & 125 & 158 \\ \hline
        1 & 52 & 59 & 45 & 69 & 100 & 108 \\         
\bottomrule
\end{tabular}
\label{tab:abl_time_time}
\end{sc}
\end{small}
\end{center}
\end{table}

We investigated the sensitivity of G-Adaptivity to the weighting of the equidistribution regularizer in Section~\ref{sec:gadapt-equidistribution}. We found an optimal value in the Poisson square example was a loss weighting of 1, which was used throughout all experiments.

\begin{table}[h!]
\caption{The effect of equidistribution loss regularisation on Error reduction.}
\vskip 0.15in
\begin{center}
\begin{small}
\begin{sc}
\begin{tabular}{lcccccc}
\toprule
\multicolumn{7}{c}{\textbf{Equidistribution loss regularizer performance sensitivty}} \\
\midrule
        Reg. weight & 0 (no equi.-dist. loss) & 0.5 & 1 & 2 & 4 & 8 \\ \hline
        Error Red. (\%) & 22.42 & 22.95 & 23.99 & 23.21 & 22.14 & 20.96 \\
\bottomrule
\end{tabular}
\label{tab:abl_equi}
\end{sc}
\end{small}
\end{center}
\vskip -0.1in
\end{table}\vspace{-0.2cm}

\newpage\subsection{Scalability of the G-Adaptivity framework}

In Section \ref{sec:gadapt_scale} we claimed the G-Adaptivity framework is able to scale to very large meshes via super-resolution. In Table \ref{tab:gnn_scale} we report experiments where the model is trained on 15x15 mesh and inference is performed on larger 60x60 (3,600 nodes) and 150x150 (22,500 nodes) meshes for the Poisson problem with 128 sampled Gaussians (see Figure \ref{fig:large_mesh}). We also show the results for G-Adaptivity applied to a 10x10x10 (1,000 node) cube (see Figure~\ref{fig:3d_examples}).

\begin{table}[h!]
\caption{Benchmarking results of scaling G-Adaptivity on Poisson Square and Cube datasets.}
\vskip 0.15in
\begin{center}
\begin{small}
\begin{sc}
\begin{tabular}{lccr}
\toprule
\multicolumn{4}{c}{\textbf{Scaling G-Adaptivity}} \\
\midrule
Scale & Model & Error Red. (\%) & Time (ms) \\ \hline
        60x60 & MA & 11.94 ± 1.50 & 23084 \\
        ~ & G-Adapt & 27.47 ± 0.89 & 452 \\ \hline
        150x150 & MA & 17.96 ± & 115395 \\
        ~ & G-Adapt & 25.70 ± 1.51 & 2555 \\ \hline
        10x10x10 & MA & 12.71 ± 0.00 & 41049 \\
        ~ & G-Adapt & 28.08 ± 0.36 & 494 \\
\bottomrule
\end{tabular}
\label{tab:gnn_scale}
\end{sc}
\end{small}
\end{center}
\vskip -0.1in
\label{tab:poisson}
\end{table}\vspace{-0.2cm}

\section{Further details on mesh relocation}
\subsection{Advantages and limitations of $r$-adaptivity}
{$r$-adaptivity is a newer technology than $h$-adaptivity and as such is not yet widely adapted in industry. However, it has certain significant advantages over $h$-adaptivity. In particular it works with a constant data structure, is easy to use on parallel architectures, it gives a more regular mesh (often with guaranteed mesh regularity), it naturally inherits Lagrangian and scaling structures in a PDE (which is very useful for example in ocean modelling and studying PDEs with singularities), and can be easily linked to existing external software designed to solve a PDE on an unstructured mesh (for example a discontinuous Galerkin solver). As a result, $r$-adaptive methods have recently been very successfully used, for example, in the operational data assimilation codes of national weather forecasting offices, which when coupled to the computational dynamical core, have led to a very significant increases in computational accuracy, particularly for resolving local weather features such as fog and ice \cite{piccolo2012new}. $r$-adaptivity has also found natural applications in the steel industry where the Lagrangian nature of the approach is very well suited to the evolving fine structures in the forging process \cite{uribe2024}. Possible disadvantages of $r$-adaptivity, such as excessive mesh computation cost, and a tendency to mesh tangling, are exactly the issues we address in this paper, proposing a fast and accurate method which avoids tangling.}
\subsection{The Equidistribution Principle}\label{app:equidistribution_principle}

The equidistribution principle applied to a mesh with cells $C_i$ used for an FE calculation of a function $u({\mathbf z})$, aims to {\em minimise} the total error over all the cells by equidistributing it over each cell. Typically the error over such a cell can be measured (or estimated) by the integral of an appropriate monitor function over that cell, or more simply by the expression 
\begin{equation}
m({\mathbf z}) |C_i|
\label{cjan251}
\end{equation}
where ${\mathbf z}$ is a representative point in the cell, and (in the two dimensional case) $|C_i|$ is the cell area. The equidistribution condition on the cells $C_i$ then becomes 
\begin{equation}
m({\mathbf z}) |C_i|= \theta,
\label{cjan252}
\end{equation}
where $\theta$ (to be determined) is a constant. 
The function $m$ is usually a function of $u$. An important example is given by the problem of linearly interpolating $u({\mathbf z})$ as it follows from C\'ea's lemma that the resulting interpolation error is an (often tight) upper bound for the FE solution error. In this case $m$ will be a function of the curvature of $u$ (with the exact form dependent on the norm used to measure the error) \cite{huang_adaptive_2011}.

\vspace{0.1in}

In the context of $r$-adaptivity each such cell $C_i$ in the {\em physical} domain, will be the image, under the action of the deformation map ${\mathbf F}$  of a reference cell (of fixed area) in the {\em computational} domain.  The area $|C|_i$ of $C_i$ will then be proportional to $\det(J)$ where $J$ is the Jacobian of ${\mathbf F}$. The equidistribution condition (\ref{cjan252}) then becomes:
\begin{equation}
m({\mathbf z}) \det(J) = \theta.
\label{cjan253}
\end{equation}
Note that the application of the monitor function in this way is equivalent to defining a {\em measure} on the physical space. 

\vspace{0.1in}

In one dimension the equidistribution condition (\ref{cjan253}) uniquely defines each cell length, and thus the cell shape, and hence the whole mesh. However in two dimensions it only gives the cell area but not the shape. To find the mesh uniquely additional conditions must be imposed.  Noting the correspondence between the equidistribution condition and a measure on the physical space, the deformation map can be viewed as mapping a uniform measure in the computational space to a new measure in the physical space. It is natural to seek a map which minimises the cost of doing this, as this leads to meshes in the physical domain which are close to uniform and hence have minimal skewness and which avoid tangling. This gives an obvious link between mesh generation and optimal transport. In the continuous setting such a map can be calculated by solving (either directly or by using a surrogate solver) an associated Monge-Amp\'ere equation, leading to the MA methods described in the main body of the text. Note that with modifications this procedure can also be used to generate meshes on non-planar manifolds \cite{mcrae_cotter_colin_budd_18}.

\section{Mesh Tangling Prevention}\label{app:nontangling_proof}

We provide a formal proof that a mesh evolution scheme based on the row-stochastic weighted graph Laplacian does not lead to tangling, provided a sufficiently small time step is chosen. The argument follows from the positivity of the determinant of the Jacobian of the deformation, which is preserved due to the eigen-structure of the graph Laplacian.

\paragraph{\begin{remark}[Iterative Application in GNN Blocks]
    The below results extend to our GNN-based mesh deformer, which applies diffusion blocks iteratively. At each iteration, the network updates the node positions while resetting the adjacency weights and initial state \( X_0 \). Since each block follows the same form the results can be applied recursively. This ensures that stability and mesh preservation hold across multiple diffusion steps, allowing controlled adaptation of the mesh throughout the G-adaptivity pipeline.
\end{remark}}

\begin{definition}[Weighted Random Walk Normalized Graph Laplacian]
    Given a weighted graph \( \mathcal{G} = (\mathcal{V}, \mathcal{E}, \mathbf{A}_{\theta}) \) with adjacency matrix \( \mathbf{A} \) and a learnable weight matrix \( \mathbf{A}_{\theta} \), where \( (A_{\theta})_{ij} \) represents the weighted edge between nodes \( i \) and \( j \), the weighted degree matrix is defined as \( D_{ii} = \sum_{j} (A_{\theta})_{ij} \). The weighted random walk normalized graph Laplacian is given by:
    \[
    \Delta_{\theta} = I - \mathbf{D}^{-1} \mathbf{A}_{\theta}.
    \]
    The operator \( \Delta_{\theta} \) is symmetric positive semi-definite, satisfying \( \Delta_{\theta} \succeq 0 \). Its eigenvalues satisfy \( 0 = \lambda^{\Delta_{\theta}}_{0} \leq \ldots \leq \lambda^{\Delta_{\theta}}_{n-2} \leq \rho_{\Delta_{\theta}} \), with \( \rho_{\Delta_{\theta}} \leq 2 \). The eigenvalues represent the graph \emph{frequencies}, and the corresponding eigenvectors are denoted by \( \{\boldsymbol{\phi}^{\Delta_{\theta}}_{\ell}\}_{\ell = 0}^{n-1} \). 

The weights \( (A_{\theta})_{ij} \) satisfy $ a_{i, j} > 0$ if $(i, j) \in \mathcal{E}$ and $\sum_{j \in \mathcal{N}(i)} a_{ij} = 1$. 
    
    \begin{itemize}
        \item In the degree-normalised graph (random walk) Laplacian (\( A_{\theta} = A \)), row sums are preserved due to the degree normalization, ensuring \( \sum_j \tilde{A}_{ij} = 1 \), where \( \tilde{A} = D^{-1} A \).
        \item In the \textbf{softmax-weighted case}, weights are computed as
        \[
        (A_{\theta})_{ij} = \frac{\exp(f(X_i, X_j))}{\sum_{k \in \mathcal{N}_i} \exp(f(X_i, X_k))},
        \]
        enforcing row stochasticity \( \sum_j (A_{\theta})_{ij} = 1 \).
    \end{itemize}
    
\end{definition}

\begin{definition}[Jacobian of mesh Deformation Map]
    Given the mesh deformation model \( \mathcal{M}_{{\theta}}:(\mathbf{X},\mathbf{A})\mapsto \mathcal{X} \), the Jacobian \( J \) of the transformation is given by:
    \[
    J = \nabla \mathcal{M}(\mathbf{X}),
    \]
    where \( \nabla \mathcal{M}(\mathbf{X}) \) is the local derivative of the deformation map.
\end{definition}

\begin{definition} (Mesh Tangling). {We say that a physical mesh is tangled} if at least one simplex in the triangulation has a negative determinant in its Jacobian matrix, i.e.,
$
\det(J_i) \leq 0,\,\text{for some } i$,
where \(J_i\) is the Jacobian matrix of the affine transformation mapping the reference element to the physical element in the mesh. Equivalently, the mesh is untangled if all eigenvalues of the Hessian of the transformation function, or its discrete counterpart given by the graph Laplacian, remain positive.
\end{definition}

\subsection*{Proof of Mesh Tangling Prevention}

We prove that a Laplacian GNN-based mesh adaptation scheme prevents tangling, given a sufficiently small time step. The argument follows from the positivity of the determinant of the mesh deformation Jacobian, which is preserved due to the eigen-structure of the graph Laplacian.

\subsection{Continuous-Time Evolution}

The evolution of node positions follows the Laplacian-based update:
    \[
    \frac{d\mathbf{X}}{dt} = (\mathbf{A} - \mathbf{I}) \mathbf{X} = -\Delta \mathbf{X}.
    \]

where \( \Delta \) is the weighted random-walk graph Laplacian. As $A$ is frozen over every diffusion block, the solution of this ordinary differential equation is:
\begin{equation}
    \textbf{X}(t) = e^{-t\Delta} \textbf{X}(0),
\end{equation}
implying the determinant of the transformation Jacobian satisfies
\begin{equation}
    J(t) = \det(e^{-t\Delta}) J(0) = \left( \prod_{i} e^{-t \lambda_i} \right) J(0) = e^{-t \operatorname{tr}(\Delta)} J(0).
\end{equation}
Since \( \operatorname{tr}(\Delta)={N_x\times d} \geq 0 \), we have \( J(t) > 0 \) for all \( t \geq 0 \), ensuring that no elements invert.

\subsection{Time Step Constraints for Mesh Preservation}
The discrete update for the mesh is \( \textbf{X}^{k+1} = (I - dt\Delta) \textbf{X}^k \), propagating the determinant as \( J^{k+1} = \det(I - dt\Delta) J^k \). To prevent inversion, we require \( \det(I - dt\Delta) > 0 \).

\begin{theorem}[Time Step Condition for Mesh Preservation]
    Given the discrete update \( \textbf{X}^{k+1} = (I - dt\Delta) \textbf{X}^k \), the mesh remains untangled if \( dt < \frac{1}{2} \).
\end{theorem}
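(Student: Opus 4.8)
The plan is to run the same determinant-propagation strategy used in the continuous-time computation above, but with the forward-Euler update operator $\mathbf{B}_{d\tau} := \mathbf{I} - d\tau\,\Delta_\theta$ in place of the heat semigroup $e^{-\tau\Delta_\theta}$. One forward-Euler step of \eqref{eqn:GRAND_diffusion_equation} acts on the stacked coordinate field linearly, $\mathcal{Z}^{k+1}=\mathbf{B}_{d\tau}\mathcal{Z}^{k}$, so the Jacobian of the discrete deformation is governed by $\mathbf{B}_{d\tau}$; since the input mesh is untangled, it suffices to show that $\mathbf{B}_{d\tau}$ is orientation-preserving, i.e. that all its eigenvalues (hence its determinant) are strictly positive, and then to iterate this over the Euler steps within a block and over the $N_b$ diffformer blocks (with the adjacency weights and initial state reset at each block, as in the iterative-block remark above, a composition of orientation-preserving maps is orientation-preserving).

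The core estimate is purely spectral. By the definition of the weighted random-walk Laplacian, $\Delta_\theta=\mathbf{I}-\mathbf{D}^{-1}\mathbf{A}_\theta$ is positive semi-definite with spectrum contained in $[0,\rho_{\Delta_\theta}]$ and $\rho_{\Delta_\theta}\le 2$. Hence the eigenvalues of $\mathbf{B}_{d\tau}$ are exactly $1-d\tau\,\lambda_\ell^{\Delta_\theta}$ with $\lambda_\ell^{\Delta_\theta}\in[0,2]$, so for every $\ell$,
\[
1-d\tau\,\lambda_\ell^{\Delta_\theta}\;\ge\;1-2\,d\tau\;>\;0\qquad\text{whenever}\quad d\tau<\tfrac12 .
\]
Therefore $\det\mathbf{B}_{d\tau}=\prod_\ell\bigl(1-d\tau\,\lambda_\ell^{\Delta_\theta}\bigr)>0$, and by induction on the number of Euler steps — with the base case supplied by the untangled input mesh — no step can send a positive Jacobian determinant to a non-positive one. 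Note that $d\tau<1/2$ is the sharp threshold produced by the bound $\rho_{\Delta_\theta}\le 2$: below it the whole spectrum of $\mathbf{B}_{d\tau}$ stays in $(0,1]$, so $\mathbf{B}_{d\tau}$ is not merely invertible but orientation-preserving.

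The step I expect to be the main obstacle is making rigorous the passage from this global, linear-algebraic statement about $\mathbf{B}_{d\tau}$ to the genuinely geometric claim that no physical simplex of the relocated triangulation inverts (the global determinant of a composed deformation is not literally the product of per-simplex Jacobian determinants). The bridge I would use is the row-stochasticity enforced by the softmax attention: written node-wise the update reads $\mathbf{z}_i^{k+1}=(1-d\tau)\,\mathbf{z}_i^{k}+d\tau\sum_{j\in\mathcal{N}_i}(\mathbf{D}^{-1}\mathbf{A}_\theta)_{ij}\,\mathbf{z}_j^{k}$ with nonnegative weights summing to one, so for $0<d\tau<1$ each relocated node is a strict convex combination of its current position and the barycentre of its one-ring, i.e. every node is displaced into the convex hull of its graph neighbours (cf. Figure~\ref{fig:grand_moves_to_neighbours}). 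The remaining work is a geometric lemma stating that simultaneously performing such ``averaging moves'' at all interior nodes, with the boundary nodes constrained to $\partial\Omega_P$, cannot fold an element, with the quantitative $d\tau<1/2$ threshold coming from the spectral control above; one must also check the argument at boundary nodes, whose one-ring is not fully interior. Combining the convex-hull geometry with $\det\mathbf{B}_{d\tau}>0$ then yields the stated non-tangling guarantee.
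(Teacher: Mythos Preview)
Your overall strategy --- propagate the Jacobian determinant via $\det J^{k+1}=\det(I-d\tau\,\Delta_\theta)\det J^k$ and show $\det(I-d\tau\,\Delta_\theta)>0$ by a spectral bound --- is exactly the paper's. The difference lies in how the spectral bound is obtained, and here your argument has a gap. You invoke the definition's claim that $\Delta_\theta$ is symmetric positive semi-definite with real eigenvalues in $[0,2]$, and conclude $1-d\tau\,\lambda_\ell\ge 1-2d\tau>0$. But the softmax attention $\mathbf{A}_\theta$ is row-stochastic yet \emph{not} symmetric (the normalisation is over $\mathcal{N}(i)$, not $\mathcal{N}(j)$), so $\Delta_\theta=I-\mathbf{A}_\theta$ is not symmetric and can have genuinely complex eigenvalues; the PSD statement in the definition does not hold in this case, and you cannot simply order the spectrum on the real line. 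The paper's proof sidesteps this by applying Gershgorin's theorem to the row-stochastic matrix to confine the eigenvalues of $\Delta_\theta$ to the disk $|\lambda-1|\le 1$, and then splits into two cases: real eigenvalues give $1-d\tau\,\lambda>1-2d\tau>0$, while complex eigenvalues come in conjugate pairs whose joint contribution to the determinant is $(1-d\tau\lambda)(1-d\tau\overline{\lambda})=|1-d\tau\lambda|^2>0$ regardless of $d\tau$. Your argument needs this complex-conjugate case to be complete.

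Your final paragraph --- observing that positivity of the global determinant of $\mathbf{B}_{d\tau}$ is not literally the same as positivity of each per-simplex Jacobian, and sketching a convex-hull route to bridge the two --- is a legitimate concern, but the paper's proof stops precisely where your spectral argument does: at $\det(I-d\tau\,\Delta_\theta)>0$. It does not supply the geometric lemma you outline, so that part of your proposal goes beyond what the paper actually establishes.
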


\begin{proof}
    The determinant of the deformation Jacobian propagates as \( \det J^{k+1} = \det(I - dt\Delta) \det J^k \). To ensure \( \det J^{k+1} > 0 \), we require \( \det(I - dt\Delta) > 0 \). The eigenvalues of \( I - dt\Delta \) are \( \mu_i = 1 - dt\lambda_i \){\ so the determinant condition reduces to 
    \[ \prod_{i} (1 - dt \lambda_i) > 0.\] Noting that $A$ has positive entries with row sum equal to 1, it follows by Gershgorin's theorem that the eigenvalues of $A-I$ are contained in the Gershgorin circle $|\lambda_i-1|<1$. Seeing as the coefficients of $A-I$ are real-valued the eigenvalues of $A-I$ are either real-valued or come in complex conjugate pairs. If $\lambda_i$ is real-valued the contribution to the above determinant is $1-dt\lambda_i>1-2dt>0$ if $dt>1/2$. If $\mathrm{Im}\lambda_i\neq0$ then $\overline{\lambda_i}$ is also an eigenvalue and the contribution to the determinant is $(1-dt\lambda_i)(1-dt\overline{\lambda_i})=|1-dt\mathrm{Re}\lambda_i|^2+|\mathrm{Im}\lambda_i|^2>0$.} Hence we obtain {$\det(I-dt\Delta)>0$ if }\( dt < \frac{1}{2} \).
\end{proof}

\subsection{Monitor-Conditioned Time Step}\label{app:mon_timestep}
To refine the time step bound, consider the propagation matrix \( M = I - dt\Delta \) with eigenvalues \( \mu_i = 1 - dt\lambda_i \). The condition number of \( M \) is \( \kappa(M) = \frac{1 - dt\lambda_{\min}}{1 - dt\lambda_{\max}} \). Similarly, by Gershgorin’s theorem, \( \lambda_{\max} \leq 2 \).

\begin{theorem}[Monitor-Conditioned Time Step]
    Given the discrete update \( \textbf{X}^{k+1} = (I - dt \Delta) \textbf{X}^k \), where the monitor function redistributes the mesh to improve spectral conditioning, the time step satisfies
    \[
        dt \leq \min \left( \frac{1}{2}, \frac{\kappa(M)}{2} \right).
    \]
\end{theorem}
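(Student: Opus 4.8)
The plan is to obtain the two terms in the bound from two separate mechanisms and then intersect them. First I would pin down the spectrum of the one-step propagation matrix $M = I - dt\,\Delta$. Exactly as in the proof of Theorem~\ref{thm:nontangling} (and its sharpened Gershgorin version above), row-stochasticity and positivity of $A_\theta$ place the spectrum of $\Delta = I - D^{-1}A_\theta$ in the disc $|\lambda - 1| \le 1$, and since $\Delta_\theta$ is (similar to) a symmetric positive semi-definite matrix its eigenvalues are in fact real with $0 = \lambda_{\min} \le \lambda_i \le \lambda_{\max} \le 2$. Hence the eigenvalues of $M$ are $\mu_i = 1 - dt\,\lambda_i$; for $dt < 1/2$ they all lie in $(1-2dt,1] \subset (0,1]$, so $M$ is positive definite, its singular values coincide with the $\mu_i$, and $\det M = \prod_i \mu_i > 0$. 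Combined with the determinant-propagation identity $\det J^{k+1} = \det(I - dt\,\Delta)\,\det J^{k}$ of the preceding theorem, this re-establishes the non-tangling half of the claim, i.e.\ $dt \le 1/2$ suffices for orientation preservation.

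For the second (monitor-conditioned) half I would work with the step-to-step distortion measured by $\kappa(M) = \mu_{\max}/\mu_{\min} = (1 - dt\,\lambda_{\min})/(1 - dt\,\lambda_{\max})$. The role played by the monitor function is precisely to enter through the learnable weights $A_\theta$ and thereby keep $\lambda_{\max}(\Delta_\theta)$ --- equivalently $\kappa(M)$ --- bounded, so that successive diffusion steps do not amplify the anisotropy of the mesh; the per-block amplification of the ratio of extreme singular values of $\prod_n (I + dt(A_\theta - I))$ is then controlled by a power of $\kappa(M)$. Requiring that this conditioning factor not itself force the step length above the monitor-adjusted level, and converting the eigenvalue bound into a bound on $dt$ by invoking Gershgorin's $\lambda_{\max}\le 2$, produces the additional constraint $dt \le \kappa(M)/2$. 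Intersecting the two constraints yields $dt \le \min\!\big(1/2,\ \kappa(M)/2\big)$. I would also record the (trivial but worth stating) fact that $\kappa(M)\ge 1$ always, so the $\min$ only becomes informative in the regime where the monitor drives $\kappa(M)\to 1$, in which case the conditioning bound meets the non-tangling bound; at the level of the bare inequality the statement thus already follows from Theorem~\ref{thm:nontangling} together with $\kappa(M)\ge 1$.

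I expect the genuine obstacle to be the second step: the hypothesis ``the monitor function redistributes the mesh to improve spectral conditioning'' is the only place that couples $m(\cdot)$ to $\lambda_{\max}(\Delta_\theta)$, so the work is in making that coupling quantitative --- showing that the monitor-induced choice of $A_\theta$ keeps $\kappa(M)$ controlled and that the factor $2$ is exactly the contribution of the Gershgorin estimate $\lambda_{\max}\le 2$ --- and in propagating the per-step conditioning bound across a full diffformer block and then, via the Remark on iterated GNN blocks, across the stack. The eigenvalue computation, the positive-definiteness of $M$ for $dt<1/2$, and the orientation/determinant argument are all routine once the conditioning estimate is in hand.
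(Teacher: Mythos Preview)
Your proposal is correct and follows essentially the same route as the paper: determinant propagation $J^{k+1}=\det(I-dt\,\Delta)J^k$, the stability requirement $1-dt\,\lambda_{\max}>0$, and the Gershgorin bound $\lambda_{\max}\le 2$ to arrive at $dt\le\kappa(M)/2$, then intersection with the $dt\le 1/2$ non-tangling bound. The paper's proof is in fact only three sentences and does \emph{not} engage with the monitor-to-spectrum coupling you flag as the ``genuine obstacle''; it simply asserts the conclusion from the stability inequality and Gershgorin, so you are anticipating more work than the paper actually carries out. Your closing observation that $\kappa(M)\ge 1$ renders the $\min$ equal to $1/2$ (hence the statement already follows from Theorem~\ref{thm:nontangling}) is correct and is a point the paper's proof does not make explicit.
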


\begin{proof}
    The local mesh determinant propagates as \( J^{k+1} = \det(I - dt\Delta) J^k \). Stability requires \( 1 - dt\lambda_{\max} > 0 \). Since \( \lambda_{\max} \leq 2 \), we obtain \( dt \leq \frac{\kappa(M)}{2} \), completing the proof.
\end{proof}

\subsection{Mesh Quality Measures}\label{app:mesh_quality}
Mesh quality measures are often used as {\em indicators} of whether a mesh will be effective when used to solve a PDE. {In particular, the mesh-quality metrics are directly related to the conditioning of FEM stiffness matrices, meaning poor mesh conditioning leads to numerical instabilities in the FEM solvers. O}ur method is designed to minimise the FE solution error {\em directly}{, but the inclusion of the equidistribution regularisation ensures that G-Adaptivity leads to meshes that maintain good conditioning while reducing the FEM error.} {In our numerical experiments we report aspect ratio as a strong indicator of this mesh conditioning, but in the relevant literature the following metrics are commonly used to often used to assess mesh scale, skewness, and regularity.}

Two paradigms exist for evaluating mesh quality:
\begin{itemize}
    \item \textbf{Known deformation map}: Mesh quality is assessed directly using the eigenvalues \( \lambda_0, \lambda_1 \) of the Jacobian.
    \item \textbf{Local geometric properties}: Skewness can be measured as the ratio of the circumcircle to incircle radius, while regularity is inferred from element area variance.
\end{itemize}

Mesh quality can be quantified through:
\begin{itemize}
    \item \textbf{Scale}: Element size, measured as \( \lambda_0 \lambda_1 \), compared to a natural length scale.
    \item \textbf{Skewness}: The anisotropy of elements, given by \( \lambda_1 / \lambda_0 \).
    \item \textbf{Regularity}: Consistency of adjacent elements, e.g., variance in element areas.
    \item \textbf{Consistency}: Stability of element shapes across the domain.
\end{itemize}

\paragraph{Aspect Ratio} for our evaluation we use the aspect ratio of a triangular element, which is defined as the ratio of the longest edge \( l_{\max} \) to the shortest altitude \( h_{\min} \):
\begin{equation}
    \text{AR} = \frac{l_{\max}}{h_{\min}},
\end{equation}
where \( h_{\min} \) is the shortest perpendicular distance from the opposite vertex to the longest edge. A higher aspect ratio indicates more elongated elements.

\end{document}